\numberwithin{equation}{section}
\DeclareMathOperator{\E}{\mathbb{E}}
\DeclareMathOperator{\Var}{Var}
\DeclareMathOperator{\Cov}{Cov}
\renewcommand{\Pr}[2][]{\mathbb{P}_{#1} \left\{ #2 \rule{0mm}{3mm}\right\}}
\newcommand{\ip}[2]{\langle#1,#2\rangle}
\def \P {\mathbb{P}}
\def \R {\mathbb{R}}
\def \FF {\mathcal{F}}
\def \e {\varepsilon}
\def \d {\delta}
\def \l {\lambda}
\def \s {\sigma}
\def \tran {\mathsf{T}}
\def \one {{\mathbf 1}}
\newtheorem{theorem}{Theorem}[section]
\newtheorem{proposition}[theorem]{Proposition}
\newtheorem{lemma}[theorem]{Lemma}
\theoremstyle{remark}
\begin{document}

\title{Memory capacity of neural networks with threshold and ReLU activations}

\author{Roman Vershynin}
\date{\today}

\address{Department of Mathematics, University of California, Irvine}
\email{rvershyn@uci.edu}

\thanks{Work in part supported by U.S. Air Force grant FA9550-18-1-0031}

\begin{abstract}
Overwhelming theoretical and empirical evidence shows 
that mildly over\-para\-metrized neural networks
-- those with more connections than the size of the training data -- 
are often able to memorize the training data with $100\%$ accuracy. 
This was rigorously proved for networks with sigmoid activation functions 
\cite{yamasaki1993lower, huang2003learning}
and, very recently, for ReLU activations \cite{yun2019small}. 
Addressing a 1988 open question of Baum \cite{baum1988capabilities},
we prove that this phenomenon holds for general multilayered perceptrons, 
i.e. neural networks with threshold activation functions, or with any mix 
of threshold and ReLU activations.
Our construction is probabilistic and exploits sparsity.
\end{abstract}

\maketitle


\section{Introduction}

This paper continues the long study of the memory capacity of neural architectures. How much information can a human brain learn? What are fundamental memory limitations, and how should the ``optimal brain'' be organized to achieve maximal capacity? These questions are complicated by the fact that we do not sufficiently understand the architecture of the human brain. But suppose that a neural architecture is known to us. Consider, for example, a given artificial neural network. Is there a general formula that expresses the memory capacity in terms of the network's architecture?

\subsection{Neural architectures}
In this paper we study a general layered, feedforward, fully connected neural architecture with arbitrarily many layers, arbitrarily many nodes in each layer, with either threshold or ReLU activation functions between all layers, and with the threshold activation function at the output node. 

Readers unfamiliar with this terminology may think of a neural architecture as a computational device that can compute certain compositions of linear and nonlinear maps. Let us describe precisely the functions computable by a neural architecture. Some of the best studied and most popular nonlinear functions $\phi: \R \to \R$, or ``activation functions'', include the {\em threshold function} and the {\em rectified linear unit} (ReLU), defined by
\begin{equation}	\label{eq: nonlinearity}
\phi(t) = \one_{\{t>0\}}
\quad \text{and} \quad
\phi(t) = \max(0,t) = t_+,
\end{equation}
respectively.\footnote{It should be possible to extend out results for other activation functions. 
To keep the argument simple, we shall focus on the threshold and ReLU nonlinearities in this paper.}  
We call a map {\em pseudolinear} if it is a composition of an affine transformation 
and a nonlinear transformation $\phi$ applied coordinate-wise.
Thus, $\Phi: \R^n \to \R^m$ is pseudolinear map if it can be expressed as 
$$
\Phi(x) = \phi(Vx-b), \quad x \in \R^n,
$$
where $V$ is a $m \times n$ matrix of ``weights'', $b \in \R^m$ is a vector of ``biases'', 
and $\phi$
is either the threshold or ReLU function \eqref{eq: nonlinearity}, 
which we apply to each coordinate of the vector $Wx-b$.

A {\em neural architecture} computes compositions of pseudolinear maps,
i.e. functions $F: \R^{n_1} \to \R$ of the type
$$
F = \Phi_L \circ \cdots \circ \Phi_2 \circ \Phi_1
$$
where $\Phi_1: \R^{n_1} \to \R^{n_2}$, $\Phi_2: \R^{n_2} \to \R^{n_3}$, 
\ldots, $\Phi_{L-1}: \R^{n_{L-1}} \to \R^{n_L}$, $\Phi_L: \R^{n_L} \to \R$
are pseudolinear maps. 
Each of maps $\Phi_i$ may be defined using either the threshold or ReLU function, 
and mix and match is allowed. However, for the purpose of this paper, 
{\em we require the output function $\Phi_L: \R^{n_L} \to \R$ to have the threshold
activation}.\footnote{General neural architectures used by practitioners and considered in the 
literature may have more than one output node and have other activation 
functions at the output node.}

We regard the matrices $V$ and $b$ in the definition of each pseudolinear map $\Phi_i$
as free parameters of the given neural architecture. Varying these free parameters
one can make a given neural architecture compute different functions $F: \R^{n_1} \to \{0,1\}$. 
Let us denote the class of such functions computable by a given architecture by
$$
\FF(n_1,\ldots,n_L,1).
$$

\begin{figure}[htp]	
  \centering 
    \includegraphics[width=0.35\textwidth]{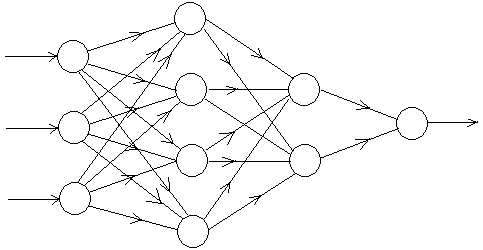} 
    \caption{A neural architecture with an input layer, 
    two hidden layers, and an output node. The class of functions
    $F: \R^3 \to \R$ this architecture can compute is denoted $\FF(3,4,2,1)$.}
  \label{fig: neural-network}
\end{figure}

A neural architecture can be visualized as a directed graph,
which consists of $L$ layers each having $n_i$ nodes (or {\em neurons}),
and one output node. 
Successive layers are connected by bipartite graphs, each of which 
represents a pseudolinear map $\Phi_i$.
Each neuron is a little computational device. 
It sums all inputs from the neurons in the previous layer 
with certain weights, applies the activation function $\phi$
to the sum, and passes the output to neurons in the next layer. 
More specifically, the neuron determines if the sum of incoming signals
from the previous layers exceeds a certain firing threshold $b$. 
If so, the neuron fires with either a signal of strength $1$ (if $\phi$ is 
the threshold activation function) or with strength proportional to 
the incoming signal (if $\phi$ is the ReLU activation).

\subsection{Memory capacity}

When can a given neural architecture remember a given data? 
Suppose, for example, that we have $K$ digital pictures of cats and dogs, encoded as
as vectors $x_1,\ldots,x_K \in \R^{n_1}$, and labels $y_1,\ldots,y_K \in \{0,1\}$
where $0$ stands for a cat and $1$ for a dog. Can we train a given neural architecture
to memorize which images are cats and which are dogs? 
Equivalently, does there exist a function
$F \in \FF(n_1,\ldots,n_L,1)$ such that 
\begin{equation}	\label{eq: memorization intro}
F(x_k)=y_k
\quad \text{for all} \quad 
k=1,\ldots,K?
\end{equation}

A common belief is that this should happen for any sufficiently {\em overparametrized} network --
an architecture that that has significantly more free parameters than the size of the training data. 
The free parameters of our neural architecture 
are the $n_{i-1} \times n_i$ weight matrices $V_i$ and the bias vectors $b_i \in \R^{n_i}$. 
The number of biases is negligible compared to the number of weights, 
and the number of free parameters is approximately the same as 
the {\em number of connections}\footnote{We dropped the number of output connections, which is negligible compared to $\overline{W}$.}
$$
\overline{W} = n_1n_2+\ldots+n_{L-1}n_L.
$$
Thus, one can wonder whether a general neural architecture is able to memorize the data as long as 
the number of connections is bigger than the size of the data, i.e. as long as
\begin{equation}	\label{eq: conj WK}
\overline{W} \gtrsim K.
\end{equation}

Motivated by this question, one can define {\em memory capacity} of a given architecture 
as the largest size of general data 
the architecture is able to memorize. In other words, the memory capacity 
is the largest $K$ such that for a general\footnote{One may sometimes wish to exclude 
some kinds of pathological data, so natural assumptions can be placed on the set of data points $x_k$. In this paper, for example, we consider unit and separated points $x_k$.} 
set of points $x_1,\ldots,x_K \in \R^{n_1}$ 
and for any labels $y_1,\ldots,y_K \in \{0,1\}$ 
there exists a function $F \in \FF(n_1,\ldots,n_L)$ 
that satisfies  \eqref{eq: memorization intro}.

The memory capacity is clearly bounded above by the vc-dimension,\footnote{This is because 
  the vc-dimension is the maximal $K$ for which {\em there exist} points $x_1,\ldots,x_K \in \R^{n_1}$ 
  so that for any labels $y_1,\ldots,y_K \in \{0,1\}$ there exists a function $F \in \FF(n_1,\ldots,n_L)$ 
that satisfies  \eqref{eq: memorization intro}. The memory capacity requires {\em any} general set 
of points $x_1,\ldots,x_K \in \R^{n_1}$ to succeed as above.}
which is $O(\overline{W} \log \overline{W})$ for neural architectures 
with threshold activation functions \cite{baum1989size}
and $O(L\overline{W} \log \overline{W})$
for neural architectures with ReLU activation functions \cite{bartlett2019nearly}.
Thus, our question is whether these bounds are tight -- is memory 
capacity (approximately) proportional to $\overline W$?

\subsection{The history of the problem}			\label{s: history}

A version of this question was raised by Baum \cite{baum1988capabilities} 
in 1988. 
Building on the earlier work of 
Cover \cite{cover1965geometrical},
Baum studied the memory capacity of {\em multilayer perceptrons},
i.e. feedforward neural architectures with threshold activation functions. 
He first looked at the network architecture $[n,m,1]$ with one hidden layer
consisting of $m$ nodes (and, as notation suggests, $n$ nodes in the hidden layer and one output node).
Baum noticed that for data points $x_k$ in general position in $\R^n$, the memory capacity of the architecture $[n,m,1]$ is about $nm$, i.e. it is proportional to the number of connections. 
This is not difficult: general position guarantees that the hyperplane spanned by any 
subset of $n$ data points misses any other data points; this allows one to train each of the $m$ neurons in the hidden layer on its own batch of $n$ data points.

Baum then asked if the same phenomenon persists for deeper neural networks.
He asked whether for large $K$ there exists a deep neural architecture 
with a total of $O(\sqrt{K})$ neurons in the hidden layers and with memory capacity at least $K$.
Such result would demonstrate the benefit of depth. Indeed, 
we just saw that shallow architecture 
$[n,O(\sqrt{K}),1]$ has capacity just $n \sqrt{K}$, which would be smaller
than the hypothetical capacity $K$ of deeper architectures for $n \ll K$. 

There was no significant progress on Baum's problem.
As Mitchison and Durbin noted in 1989, 
``one significant difference between a single threshold unit and a multilayer network is that, in the latter case, the capacity can vary between sets of input vectors, 
even when the vectors are in general position'' \cite{mitchison1989bounds}.
Attempting to count different functions $F$ that a deep network can realize on a given data set
$(x_k)$, Kowalczyk writes in 1997: ``One of the complications arising here is that in contrast to the single neuron case even for perceptrons with two hidden units, the number of implementable dichotomies may be different for various $n$-tuples in general position... Extension of this result to the multilayer case is still an open problem'' \cite{kowalczyk1997estimates}.


\medskip

The memory capacity problem is more tractable 
for neural architectures in which the threshold activation 
is replaced by one of its continuous proxies such as
ReLU, sigmoid, tanh, or polynomial activation functions.
Such activations allow neurons to fire with variable, controllable amplitudes. 
Heuristically, this ability makes it possible to encode the training data 
very compactly into the firing amplitudes. 

Yamasaki claimed without proof in 1993 that for the 
sigmoid activation $\phi(t) =1/(1+e^{-t})$ and for data in general position, 
the memory capacity of a general deep neural architecture is lower bounded 
$\overline{W}$, the number of connections \cite{yamasaki1993lower}.
A version of Yamasaki's claim was proved in 2003 by Huang for arbitrary data and 
neural architectures with two hidden layers \cite{huang2003learning}.

In 2016, Zhang et al. \cite{zhang2016understanding} 
gave a construction of an arbitrarily large (but not fully connected) neural architecture
with ReLU activations and whose memory capacity 
is proportional to both the number of connections
and the number of nodes. 
Hardt and Ma \cite{hardt2016identity} gave a different construction 
of a residual network with similar properties.

Very recently, Yun et at. \cite{yun2019small} removed the requirement that there be more
nodes than data, showing that the memory capacity of networks with ReLU and tahn activation functions is proportional to the number of connections. 
Ge et al. \cite{ge2019mildly} proved a similar result for 
polynomial activations.

Significant efforts was made in the last two years to justify why  
for overparametrized networks, the gradient descent and its variants could achieve $100\%$ capacity
on the training data \cite{du2018gradienta, du2018gradientb, li2018learning, zou2018stochastic, allen2018convergence, ji2019polylogarithmic, zou2019improved, oymak2019towards, song2019quadratic, arora2019fine, panigrahi2019effect}; see \cite{sun2019optimization} for a survey of related developments.


\subsection{Main result}

Meanwhile, the original problem of Baum \cite{baum1988capabilities} --
determine memory capacity of networks with threshold activations -- 
has remained open.
In contrast to the neurons with continuous activation functions,  
neurons with threshold activations either not fire at all 
of fire with the same unit amplitude. The strength of the incoming signal 
is lost when transmitted through such neurons, and it is not clear
how the data can be encoded.
 
This is what makes Baum's question hard. 
In this paper, we (almost) give a positive answer to this question.

Why ``almost''? First, the size of the input layer $n_1$ should not affect the capacity bound
and should be excluded from the count of the free parameters $\overline{W}$.
To see this, consider, for example, the data points $x_k \in \R^{n_1}$ all laying on one line; 
with respect to such data, the network is equivalent to one with $n_1=1$.
Next, ultra-narrow bottlenecks should be excluded at least for the threshold nonlinearity: 
for example, any layer with just $n_i=1$ node make the number of connections that occur
in the further layers irrelevant as free parameters.

In our actual result, we make somewhat stronger assumptions: 
in counting connections, we exclude not only the first layer but also the second;
we rule out all exponentially narrow bottlenecks (not just of size one); 
we assume that the data points $x_k$ are unit and separated;
finally, we allow logarithmic factors.

\begin{theorem}					\label{thm: main}
  Let $n_1,\ldots,n_L$ be positive integers, and set 
  $n_0 := \min(n_2,\ldots,n_L)$ and $n_\infty := \max(n_2,\ldots,n_L)$.
  Consider unit vectors $x_1,\ldots,x_K \in \R^n$ that satisfy
  \begin{equation}	\label{eq: main separation}
  \|x_i-x_j\|_2 \ge C \sqrt{\frac{\log\log n_\infty}{\log n_0}}.
  \end{equation}
  Consider any labels $y_1,\ldots,y_K \in \{0,1\}$.
  Assume that the number of deep connections $W := n_3 n_4 + \cdots + n_{L-1} n_L$
  satisfies
  \begin{equation}	\label{eq: W main}
  W \ge C K \log^5 K,
  \end{equation}
  as well as $K \le \exp(cn_0^{1/5})$ and $n_\infty \le \exp(cn_0^{1/5})$.
  Then the network can memorize the label assignment $x_k \to y_k$ exactly, 
  i.e. there exists a map $F \in \FF(n_1,\ldots,n_L,1)$ 
  such that
  \begin{equation}	\label{eq: main conclusion}
  F(x_k)=y_k
  \quad \text{for all} \quad 
  k=1,\ldots,K.
  \end{equation}
  Here $C$ and $c$ denote certain positive absolute constants.
\end{theorem}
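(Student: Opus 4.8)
The plan is to split the task into a randomized \emph{encoding} stage, carried out by the first two (uncounted) maps $\Phi_1,\Phi_2$, and a deterministic \emph{memorization} stage, carried out by the deep maps $\Phi_3,\dots,\Phi_L$. The encoding stage converts the geometry into combinatorics: I want $\Phi_2\circ\Phi_1:\R^{n_1}\to\{0,1\}^{n_3}$ to send $x_1,\dots,x_K$ to \emph{distinct, sparse} binary codewords $v_1,\dots,v_K$ of a common support size $s\asymp\log K$, living inside a logarithmically-small block of coordinates. The construction is probabilistic: $\Phi_1$ applies a Gaussian weight matrix followed by the nonlinearity with biases tuned so that each output coordinate equals $1$ only with a small probability $p$, producing codewords of typical support $\asymp p\,n_2$; then $\Phi_2$ re-hashes and trims these to codewords of a fixed small support. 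Having two free maps is what makes this comfortable: $\Phi_1$ may spread the points into the (possibly wide) layer of size $n_2$, where separating them is cheap, and $\Phi_2$ then compresses them into the economical $\Theta(\log K)$-support form needed downstream.

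The engine behind the encoding is an anti-concentration estimate: if $\|x_i-x_j\|_2\ge\delta$, then a single biased random threshold neuron ``separates'' $x_i$ and $x_j$ (outputs different bits on them) with probability bounded below in terms of $\delta$ — a two-dimensional Gaussian small-ball computation exploiting $\ip{x_i}{x_j}=1-\delta^2/2$. Iterating over the $n_2$ neurons and union-bounding over the $\binom{K}{2}$ pairs (together with the auxiliary randomness used by $\Phi_2$ and by the rounding steps below, whose size is tied to the widths and hence contributes the $\log\log n_\infty$), one gets that all codewords are distinct — in fact pairwise far in Hamming distance, which gives robustness — with high probability. Matching the resulting failure probability against $K^2$ and the ambient dimensions is precisely what forces the separation requirement \eqref{eq: main separation} and the ceilings $K,n_\infty\le\exp(cn_0^{1/5})$.

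Once the points are sparse binary codewords, memorization is a Boolean-circuit construction fitted into the deep part of \emph{whatever} architecture we are handed. The basic gadget is a \emph{detector}: a threshold neuron with weight vector $v_k$ and bias $s-\tfrac12$ fires on an input codeword $v$ iff $\supp(v_k)\subseteq\supp(v)$, hence (all supports having size $s$) iff $v=v_k$, so distinctness is all that detection needs. A layer of $b$ detectors identifies which of $b$ prescribed points the current input is, and an $\mathrm{OR}$ over the label-$1$ ones records the answer on that batch. I run this as a sequential scan: carve the deep layers into $\asymp K/\log^5 K$ consecutive blocks of $O(1)$ layers each, all restricted to width $\asymp\log^5 K$ (zeroing any surplus neurons), where block $j$ carries forward a copy of $v$ and a running ``answer/found'' flag, tests $v$ against its own batch of $\asymp\log^5 K$ points, and updates the flag; the threshold output node reads off the flag. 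Propagating $\{0,1\}$-values through threshold layers is trivial and through ReLU layers costs only one extra rounding layer, so any threshold/ReLU pattern is handled with a constant-factor overhead. The bottleneck hypotheses make everything fit: $n_0\gtrsim\log^5 K$ guarantees every deep layer is wide enough to host a block, and since each of the $L-3$ deep transitions then contributes at least $\asymp\log^{10}K$ to $W$, the hypothesis $W\ge CK\log^5 K$ forces $L-3\gtrsim K/\log^5 K$, so there are enough layers for all the blocks; the labels themselves are stored in the $\mathrm{OR}$-weights of the blocks, $K$ bits in total.

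The main obstacle is the encoding lemma — proving that the random pseudolinear maps really do produce distinct, robustly sparse codewords under only the mild separation \eqref{eq: main separation}. The delicate point is that everything must survive passage through threshold nonlinearities (which destroy amplitude information, the very feature that makes Baum's problem hard) and then through an arbitrary mix of ReLU and threshold layers, so the codewords must tolerate $O(1)$ additive perturbations at each layer; balancing the support size, the bias $p$, the robustness margin, and the union-bound budget simultaneously is what the powers of $\log K$ and the $\exp(cn_0^{1/5})$ restrictions are paying for.
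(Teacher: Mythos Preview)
Your memorization stage has a fatal counting error. You claim that ``since each of the $L-3$ deep transitions then contributes at least $\asymp\log^{10}K$ to $W$, the hypothesis $W\ge CK\log^5 K$ forces $L-3\gtrsim K/\log^5 K$.'' This inference is backwards: from $n_i n_{i+1}\ge n_0^2\gtrsim\log^{10}K$ you get $W\ge(L-3)\log^{10}K$, which is an \emph{upper} bound $L-3\le W/\log^{10}K$, not a lower bound on the number of layers. Nothing in the hypotheses forces the network to be deep. Take $L=4$, so there is a single deep transition $n_3\to n_4$, and set $n_3=n_4\asymp\sqrt{K}\log^{5/2}K$; then $W=n_3 n_4\asymp K\log^5 K$ and all the side conditions are easily arranged, yet you have only $O(1)$ deep layers --- nowhere near the $K/\log^5 K$ blocks your sequential scan requires. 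Your construction uses only $\asymp\log^5 K$ of the width of each layer (``zeroing any surplus neurons''), so it simply throws away the capacity that a wide shallow network carries.

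The paper's approach is orthogonal to yours in exactly this respect: it exploits \emph{width}, not depth. After two enrichment layers make the data $O(1/\sqrt{d})$-orthogonal in $\R^d$ (not merely distinct sparse codewords, but quantitatively near-orthogonal), a single transition $\R^d\to\R^r$ can memorize $\asymp dr$ labels via a random-sign perceptron $w=\sum_k \pm y_k v_k$: near-orthogonality makes the cross terms $\sum_{i\ne k}\pm y_i\ip{v_i}{v_k}$ small, so $|\ip{w}{v_k}|\approx y_k$. For a general deep architecture one then partitions the label-$1$ points into batches of size $\asymp n_i n_{i+1}/W\cdot K_1$, assigns batch $i$ to the transition $n_i\to n_{i+1}$, and stacks the resulting shallow networks (each re-using the common enrichment) with an OR at the end. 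This is why $W=\sum n_i n_{i+1}$ is the right parameter: each transition contributes memorization capacity proportional to the product of its widths, not a fixed $\log^{O(1)}K$.
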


In short, Theorem~\ref{thm: main} states that the memory capacity of a general
neural architecture with threshold or ReLU activations (or a mix thereof)
is lower bounded by the number of the deep connections.
This bound is independent of the depth, bottlenecks (up to exponentially narrow), 
or any other architectural details.

\subsection{Should the data be separated?}
One can wonder about the necessity of the separation assumption \eqref{eq: main separation}.
Can we just assume that $x_k$ are distinct? 
While this is true for ReLU and tanh activations \cite{yun2019small}, 
it is false for threshold activations. 
A moment's thought reveals that any pseudolinear map from $\R$ to $\R^{m}$  
transforms any line into a finite set such of cardinality $O(m)$. Thus, by pigeonhole 
principle, any map from layer $1$ to layer $2$ is non-injective on the set of $K$ 
data points $x_k$
-- which makes it impossible to memorize some label assignments -- unless $K = O(n_2)$. 
In other words, if we just assume that the data points $x_k$ are distinct,
the network must have at least as many {\em nodes} in the second layer 
as the number of data points. 
Still, the separation assumption \eqref{eq: separation} does not look tight and might be 
weakened.

\subsection{Related notions of capacity}
Instead of requiring the network to memorize the training data with $100\%$ accuracy
as in Theorem~\ref{thm: main}, one can ask to memorize just $1-\e$ fraction, or
just a half of the training data correctly. This corresponds to a relaxed, or  {\em fractional 
memory capacity} of neural architectures that was introduced by Cover in 1965 \cite{cover1965geometrical} and studied extensively afterwards. 

To estimate fractional capacity of a given architecture, one needs to count 
all functions $F$ this architecture can realize on a given finite set points $x_k$. 
When this set is the Boolean cube $\{0,1\}^n$, this amounts to counting
all Boolean functions $F: \{0,1\}^n \to \{0,1\}$ the architecture can realize.
The binary logarithm of the number of all such Boolean functions was called 
(expressive) capacity by Baldi and the author \cite{baldi2018neuronal, baldi2019capacity}.
For a neural architecture with all threshold activations and $L$ layers, 
the expressive capacity is equivalent
to the cubic polynomial in the sizes of layers $n_i$:
$$
\sum_{i=1}^{L-1} \min(n_1,\ldots,n_i)n_in_{i+1},
$$
up to an absolute constant factor \cite{baldi2019capacity}.
The factor $\min(n_1,\ldots,n_i)$ quantifies the effect of any bottlenecks 
that occur before layer $i$.

Similar results can be proved for the {\em restricted expressive capacity}
where we count the functions $F$ the architecture can realize on a 
given finite set of $K$ points $x_k$ \cite[Section~10.5]{baldi2019capacity}.
Ideally, one might hope to find that all $2^K$ functions 
can be realized on a general $K$-element set, which would imply 
that the memory capacity is at least $K$. However, the current 
results on restricted expressive capacity are not tight enough to reach 
such conclusions.

\section{The method}

Our construction of the function $F$ in Theorem~\ref{thm: main} is probabilistic. 
Let us first illustrate our approach for the architecture $[n,n,n,1]$ with two hidden layers,
and with threshold activations throughout. We would like to find a composition of 
pseudolinear functions 
$$
F: \R^n \xrightarrow{\Phi_1} \R^n \xrightarrow{\Phi_2} \R^n \xrightarrow{\Psi} \{0,1\}
$$
that fits the given data $(x_k, y_k)$ as in \eqref{eq: main conclusion}.

The first two maps $\Phi_1$ and $\Phi_2$ are {\em enrichment maps}
whose only purpose is spread the data $x_k$ in the space, 
transforming it into an almost orthogonal set. 
Specifically, $\Phi_1$ will transform the separated points $x_k$ into 
$o(1)$-orthogonal points $u_k$ (Theorem~\ref{thm: separated to eps-orthogonal}),
$\Phi_2$ will transform the $o(1)$-orthogonal points $u_k$
into $O(1/\sqrt{n})$-orthogonal points $v_k$ 
(Theorem~\ref{thm: eps-orthogonal to max orthogonal}),
and, finally, the {\em perception map} $\Psi(x)$ will fit the data: $\Psi(v_k) = y_k$.

\subsection{Enrichment}				\label{s: enrichment intro}

Our construction of the enrichment maps $\Phi_1$ and $\Phi_2$ exploits {\em sparsity}.
Both maps will have the form 
$$
\Phi(x) = \phi(Gx-\bar{b}) = \big( \one_{\{\ip{g_i}{x} > b\}} \big)_{i=1}^n
$$
where $G$ is an $n \times n$ Gaussian random matrix with all i.i.d. $N(0,1)$ coordinates,
$g_i \sim N(0,I_n)$ are independent standard normal random vectors, 
$\bar{b}$ is the vector whose all coordinates equal some value $b>0$.

If $b$ is large, $\Phi(x)$ is a sparse random vector with i.i.d. Bernoulli coordinates.
A key heuristic is that independent sparse random vectors are almost orthogonal. 
Indeed, if $u$ and $u'$ are independent random vectors in $\R^n$ whose 
all coordinates are $\textrm{Bernoulli(p)}$, then $\E \ip{u}{u'} = np^2$ while 
$\E\|u\|_2 = \E\|u'\|_2 = np$, so we should expect 
$$
\frac{\ip{u}{u'}}{\|u\|_2 \|u'\|_2} \sim p,
$$
making $u$ and $u'$ almost orthogonal for small $p$.

Unfortunately, the sparse random vectors $\Phi(x)$ and $\Phi(x')$ are not independent 
unless $x$ and $x'$ are exactly orthogonal. Nevertheless, our heuristic that 
sparsity induces orthogonality still works in this setting. To see this,
let us check that the correlation of the coefficients $\Phi(x)$ and $\Phi(x')$ is small
even if $x$ and $x'$ are far from being orthogonal. A standard asymptotic analysis 
of the tails of the normal distribution implies that
\begin{equation}	\label{eq: corr intro}
\E \Phi(x)_i \Phi(x')_i
= \Pr{ \ip{g}{x} > b, \, \ip{g}{x'} > b }
\le 2 \exp (-b^2 \d^2/8) \; \Pr{ \ip{g}{x} > b }
\end{equation}
if $x$ and $x'$ are unit and $\d$-separated (Proposition~\ref{prop: correlation separated}), 
and
\begin{equation}	\label{eq: eps intro}
\E \Phi(x)_i \Phi(x')_i
= \Pr{ \ip{g}{x} > b, \, \ip{g}{x'} > b }
\le 2 \exp(2b^2 \e) \; \big( \Pr{ \ip{g}{x} > b } \big)^2
\end{equation}
if $x$ and $x'$ are unit and $\e$-orthogonal (Proposition~\ref{prop: correlation epsilon-orthogonal}).

Now we choose $b$ so that the coordinates of $\Phi(x)$ and $\Phi(x')$ are sparse enough, i.e.
$$
\E \Phi(x)_i = \Pr{ \ip{g}{x} > b } = \frac{1}{\sqrt{n}} =: p;
$$
thus $b \sim \sqrt{\log n}$. 
Choose $\e$ sufficiently small to make the factor $\exp(2b^2 \e)$ in \eqref{eq: eps intro}
nearly constant, i.e.
$$
\e \sim \frac{1}{b^2} \sim \frac{1}{\log n}.
$$
Finally, we choose the separation threshold $\d$ sufficiently large 
to make the factor $2 \exp (-b^2 \d^2/8)$ in \eqref{eq: corr intro} less than $\e$, i.e.  
$$
\d \sim \sqrt{\frac{\log(1/\e)}{\log n}} 
\sim \sqrt{\frac{\log \log n}{\log n}};
$$
this explains the form of separation condition in Theorem~\ref{thm: main}.

With these choices, \eqref{eq: corr intro} gives
$$
\E \Phi(x)_i \Phi(x')_i \le \e p
$$
confirming our claim that $\Phi(x)$ and $\Phi(x')$ tend to be $\e$-orthogonal
provided that $x$ and $x'$ are $\d$-separated. 
Similarly, \eqref{eq: eps intro} gives 
$$
\E \Phi(x)_i \Phi(x')_i \lesssim p^2
$$
confirming our claim that $\Phi(x)$ and $\Phi(x')$ tend to be $(p = 1/\sqrt{n})$-orthogonal
provided that $x$ and $x'$ are $\e$-orthogonal.

\subsection{Perception}			\label{s: perception intro}

As we just saw, the enrichment process transforms our data points $x_k$
into $O(1/\sqrt{n})$-orthogonal vectors $v_k$. Let us now find a {\em perception map} 
$\Psi$ that can fit the labels to the data: $\Psi(v_k)=y_k$. 

Consider the random vector 
$$
w := \sum_{i=1}^K \pm y_i v_i
$$
where the signs are independently chosen with probability $1/2$ each. 
Then, separating the $k$-th term from the sum defining $w$ 
and assuming for simplicity that $v_k$ are unit vectors, we get
$$
\ip{w}{v_k} = \pm y_k + \sum_{i: \, i \ne k} \pm y_i \ip{v_i}{v_k} 
=: y_k + \textrm{noise}.
$$
Taking the expectation over independent signs, we see that
$$
\E (\textrm{noise})^2
= \sum_{i: \, i \ne k} y_i^2 \ip{v_i}{v_k}^2
$$
where $y_i^2 \in \{0,1\}$ and $\ip{v_i}{v_k}^2 = O(1/n)$ by almost orthogonality. 
Hence 
$$
\E (\textrm{noise})^2 \lesssim K/n = o(1)
$$
if $K \ll n$. This yields
$\ip{w}{v_k} = \pm y_k + o(1)$, or 
$$
|\ip{w}{v_k}| = y_k + o(1).
$$
Since $y_k \in \{0,1\}$, the ``mirror perceptron''\footnote{The mirror perceptron requires 
 not one but two neurons to implement, which is not a problem for us.}
$$
\Psi(v) := \one_{\{\ip{w}{v} > 1/2\}} + \one_{\{-\ip{w}{v} > 1/2\}}
$$ 
fits the data exactly: $\Psi(v_k)=y_k$.

\subsection{Deeper networks}			\label{s: assembly intro}

The same argument can be repeated for networks with variable sizes of layers,
i.e. $[n,m,d,1]$. Interestingly, the enrichment works fine even if $n \ll m \ll d$, 
making the lower-dimensional data almost orthogonal even in very high dimensions.
This explains why (moderate) bottlenecks -- narrow layers --
do not restrict memory capacity. 

The argument we outlined allows the network $[n,m,d,1]$ to fit around $d$ data points,
which is not very surprising, since we expect the memory capacity be proportional 
to the number of connections and not the number of nodes. However, the power of
enrichment allows us to boost the capacity using the standard method of 
batch learning (or distributed learning). 

Let us show, for example, how the network $[n,m,d,r,1]$ with three hidden layers can fit 
$K \sim dr$ data points $(x_k,y_k)$. 
Partition the data into $r$ batches each having $O(d)$ data points. 
Use our previous result to train each of the $r$ neurons in the fourth layer 
on its own batch of $O(d)$ points, while zeroing out all labels outside that batch.
Then simply sum up the results. (The details are found in Theorem~\ref{thm: shallow}.)

This result can be extended to deeper networks using {\em stacking}, 
or unrolling a shallow architecture into a deep architecture, 
thereby trading width for depth. Figure~\ref{fig: assembly} gives an illustration of stacking, and Theorem~\ref{thm: deep} provides the details. A similar stacking construction 
was employed in \cite{baldi2019capacity}. 

The success of stacking indicates that {\em depth has no benefit} formemorization 
purposes: a shallow architecture $[n,m,d,r,1]$ can memorize roughly as much data as any deep 
architecture with the same number of connections. 
It should be noted, however, that training algorithms commonly used by practitioners, 
i.e. variants of stochastic gradient descent, do not seem to lead to anything similar to stacking; 
this leaves the question of benefit of depth open.

\subsection{Neural networks as preconditioners}

As we explained in Section~\ref{s: enrichment intro}, the first two hidden layers of the network
act as preconditioners: they transform the input vectors $x_i$ 
into vectors $v_i$ that are almost orthogonal. 
Almost orthogonality facilitates memorization process in the deeper layers, as we saw in Section~\ref{s: perception intro}.

The idea to keep the data well-conditioned as it passes through the network
is not new. The learning rate of the stochastic gradient descent
(which we are not dealing with here) is related to how well conditioned is the
so-called gradient Gram matrix $H$. In the simplest scenario where 
the activation is ReLU and the network has one hidden layer of infinite size, 
$H$ is a $K \times K$ matrix with entries
$$
H_{ij} = \E \ip{x_i}{x_j} \one_{\{\ip{g}{x_i} > 0, \, \ip{g}{x_j} > 0\}}, 
\quad \text{where} \quad
g \sim N(0,I_{n_1}).
$$
Much effort was made recently to prove that $H$ is well-conditioned, 
i.e. its smallest singular value of $H$ is bounded away from zero, 
since this can be used to establishes a good convergence rate for the stochastic gradient 
descent, see the papers cited in Section~\ref{s: history} and especially \cite{allen2018convergence, du2018gradienta, du2018gradientb, panigrahi2019effect}. 
However, existing results that prove that $H$ is well conditioned 
only hold for very overparametrized networks, 
requiring at least $n_1^4$ nodes in the hidden layer \cite{panigrahi2019effect}. 
This is a much stronger 
overparametrization requirement than in our Theorem~\ref{thm: main}. 

On the other hand, as opposed to many of the results quoted above, Theorem~\ref{thm: main}
does {\em not} shed any light on the behavior of {\em stochastic gradient descent}, 
the most popular method for training deep networks. Instead of training the weights, 
we explicitly compute them from the data. 
This allows us to avoid dealing with the gradient Gram matrix:
our enrichment method provides an explicit way to make the data well conditioned. 
This is achieved by {\em setting the biases high enough} to enforce sparsity.
It would be interesting to see if similar preconditioning guarantees can be achieved
with small (or even zero) biases and thus without exploiting sparsity.

A different form of enrichment was developed recently in the paper \cite{baldi2019capacity}
which showed that a neural network can compute a lot of different Boolean functions. 
Toward this goal, an enrichment map was implemented in the first hidden layer. 
The objective of this map is to transform the input set (the Boolean cube $\{0,1\}^n$)
into a set $S \subset \{0,1\}^m$ on which there are lots of different threshold functions --
so that the next layers can automatically compute lots of different Boolean functions. 
While the general goal of this enrichment map in \cite{baldi2019capacity}
is the same as in the present paper --
achieve a more robust data representation that is passed to deeper layers --
the constructions of these two enrichment maps are quite different.

\subsection{Further observations and questions}

As we saw in the previous section, we utilized the first two hidden layers of the network 
to preprocess, or enrich, the data vectors $x_k$. This made us skip the sizes of the first two 
layers when we counted the number of connections $W$. 
If these vectors are already nice, no enrichment may be necessary, 
and we have a higher memory capacity.

Suppose, for example, that the data vectors $x_k$ in Theorem~\ref{thm: main} 
are $O(1/\sqrt{n_\infty})$-orthogonal. Then, since no enrichment is needed in this case, 
the conclusion of the theorem holds with
$$
W = n_1 n_2 + \cdots + n_{L-1} n_L,
$$
which is the sum of {\em all} connections in the network. 

If, on the other hand, the data vectors $x_k$ in Theorem~\ref{thm: main} 
are only $O(1/\sqrt{\log n_\infty})$-orthogonal, just the second enrichment is needed,
and so the conclusion of the theorem holds with
$$
W = n_2 n_3 + \cdots + n_{L-1} n_L,
$$
which is the sum of all connections between the non-input layers. 

This make us wonder: can enrichment be always achieved in one step instead of two? 
Can one find a pseudolinear map $\Phi: \R^n \to \R^n$ that transforms a given set of
$\d$-separated vectors $x_k$ (say, for $\d = 0.01$) into a set of 
$O(1/\sqrt{n})$-orthogonal vectors? If this is possible, we would not need to exclude the 
second layer from the parameter count, and Theorem~\ref{thm: main} would hold 
for $W := n_2 n_3 + \cdots + n_{L-1} n_L$.

A related question for further study is to find an optimal separation threshold $\d$ 
in the assumption $\|x_i - x_j\|_2 \ge \d$ in Theorem~\ref{thm: main}, 
and to remove the assumption that $x_k$ be unit vectors. Both the logarithmic separation 
level of $\d$ and the normalization requirement could be artifacts of the enrichment 
scheme we used. 

There are several ways Theorem~\ref{thm: main} could be extended. 
It should not be too difficult, for example, to allow the output layer have more than one node;
such {\em multi-output networks} are used in classification problems with multiple classes. 

Finally, it should be possible to extend the analysis for {\em completely general activation functions}. 
Threshold activations we treated are conceptually the hardest case, since
they act as extreme quantizers that restrict the flow of information through the network in the most dramatic way.

\subsection{The rest of the paper}			\label{s: rest of paper}
In Section~\ref{s: correlation decay} we prove bounds
\eqref{eq: corr intro} and \eqref{eq: eps intro} which 
control $\E \Phi(x)_i \Phi(x')_i$, the correlation of the coefficients
of the coordinates of $\Phi(x)$ and $\Phi(x')$. This immediately controls
the expected inner product $\E \ip{\Phi(x)}{\Phi(x')} = \sum_i \E \Phi(x)_i \Phi(x')_i$. 
In Section~\ref{s: deviation} we develop a deviation inequality to make sure 
that the inner product $\ip{\Phi(x)}{\Phi(x')}$ is close to its expectation with high probability. 
In Section~\ref{s: enrichment}, we take a union bound over all data points $x_k$ 
and thus control all inner products $\ip{\Phi(x_i)}{\Phi(x_j)}$ simultaneously. 
This demonstrates how enrichment maps $\Phi$ make the data almost orthogonal -- 
the property we outlined in Section~\ref{s: enrichment intro}.
In Section~\ref{s: perception}, we construct a random perception map $\Psi$ 
as we outlined in Section~\ref{s: perception intro}. 
We combine enrichment and perception in Section~\ref{s: assembly} as
we outlined in Section~\ref{s: assembly intro}. 
We first prove a version of our main result for networks with three hidden layers 
(Theorem~\ref{thm: shallow}); then we stack shallow networks into an arbitrarily 
deep architecture proving a full version of our main result in Theorem~\ref{thm: deep}.

\medskip

In the rest of the paper, positive absolute constants will be denoted $C, c, C_1, c_1$, etc.
The notation $f(x) \lesssim g(x)$ means that $f(x) \le C g(x)$ 
for some absolute constant $C$ and for all values of parameter $x$. 
Similarly, $f(x) \asymp g(x)$ means that $c g(x) \le f(x) \le C g(x)$ 
where $c$ is another positive absolute constant.

We call a map $E$ {\em almost pseudolinear} if $E(x) = \l \Phi(x)$ 
for some nonnegative constant $\l$ and some pseudolinear map $\Phi$. 
For the ReLU nonlinearity, almost pseudolinear maps are automatically pseudolinear, 
but for the threshold nonlinearity this is not necessarily the case.

\section{Correlation decay}			\label{s: correlation decay}

Let $g \sim N(0,I_n)$ and consider the random process
\begin{equation}	\label{eq: random process}
Z_x := \phi(\ip{g}{x} - b)
\end{equation}
which is indexed by points $x$ on the unit Euclidean sphere in $\R^n$. 
Here $\phi$ can be either the threshold of ReLU nonlinearity as in \eqref{eq: nonlinearity}, 
and $b \in \R$ is a fixed value. 
Due to rotation invariance, the correlation of $Z_x$ and $Z_{x'}$ 
only depends on the distance between $x$ and $x'$.  
Although it seems to be difficult to compute this dependence exactly,
we will demonstrate that the correlation of $Z_x$ and $Z_{x'}$ decays rapidly 
in $b$. We will prove this in two extreme regimes -- where 
$x$ and $x'$ are just a little separated, and where $x$ and $x'$ are almost orthogonal.

\subsection{Correlation for separated vectors}		\label{s: correlation separated}

Cauchy-Schwarz inequality gives a trivial bound
$$
\E Z_x Z_{x'} \le \E Z_x^2 
$$
with equality when $x=x'$. 
Our first result shows that if the vectors $x$ and $x'$ are $\d$-separated, this bound can be dramatically improved, and we have
$$
\E Z_x Z_{x'} \le 2\exp (-b^2 \d^2/8) \, \E Z_x^2.
$$

\begin{proposition}[Correlation for separated vectors]	\label{prop: correlation separated}
  Consider a pair of unit vectors $x, x' \in \R^n$, 
  and let $b \in \R$ be a number that is larger than a certain absolute constant.
  Then
  $$
  \E \phi \big( \ip{g}{x} - b \big) \, \phi \big( \ip{g}{x'} - b \big) 
  \le 2 \exp \bigg( -\frac{b^2 \norm[0]{x-x'}_2^2}{8} \bigg)
  \, \E \phi(\gamma - b)^2
  $$
  where $g \sim N(0,I_n)$ and $\gamma \sim N(0,1)$. 
\end{proposition}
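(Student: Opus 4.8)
The plan is to reduce the claim, via rotation invariance, to a one--dimensional Gaussian tail estimate, and then to prove that estimate by elementary asymptotics of the normal distribution.

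\emph{Reduction to two dimensions and a pointwise domination.} Since $g\sim N(0,I_n)$ is rotationally invariant, we may assume $x,x'$ span a two--dimensional subspace: pick an orthonormal pair $e,f$ with
\[
x=ce+sf,\qquad x'=ce-sf,\qquad s:=\tfrac12\|x-x'\|_2,\quad c:=\sqrt{1-s^2},
\]
and set $\gamma_1:=\ip{g}{e}$, $\gamma_2:=\ip{g}{f}$, which are independent $N(0,1)$, so that $\ip{g}{x}=c\gamma_1+s\gamma_2$ and $\ip{g}{x'}=c\gamma_1-s\gamma_2$. The heart of the matter is the deterministic inequality, valid for the threshold \emph{and} the ReLU nonlinearity,
\[
\phi\big(\ip{g}{x}-b\big)\,\phi\big(\ip{g}{x'}-b\big)\ \le\ \phi(c\gamma_1-b)^2 .
\]
Indeed the left side is nonzero only when $c\gamma_1+s\gamma_2>b$ and $c\gamma_1-s\gamma_2>b$, and adding these forces $c\gamma_1>b$; on that event the left side equals $1=\phi(c\gamma_1-b)^2$ for the threshold, while for ReLU it equals $(c\gamma_1-b)^2-(s\gamma_2)^2\le(c\gamma_1-b)^2=\phi(c\gamma_1-b)^2$. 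Taking expectations (the variable $\gamma_2$ disappears) reduces the proposition to the one--dimensional bound
\[
\E\,\phi(c\gamma-b)^2\ \le\ 2\exp\!\big(-\tfrac12 b^2 s^2\big)\,\E\,\phi(\gamma-b)^2,\qquad \gamma\sim N(0,1),
\]
because $\tfrac12 b^2 s^2=\tfrac18 b^2\|x-x'\|_2^2$.

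\emph{The one--dimensional bound.} Let $\varphi$ be the standard normal density. The elementary inequality $c^{-2}=(1-s^2)^{-1}\ge 1+s^2$ gives $\varphi(b/c)\le\varphi(b)\exp(-\tfrac12 b^2 s^2)$. For the threshold $\E\phi(c\gamma-b)^2=\Pr{\gamma>b/c}$, which by the classical bound $\Pr{\gamma>t}\le\varphi(t)/t$ and $c\le1$ is at most $\tfrac1b\varphi(b)\exp(-\tfrac12 b^2 s^2)$; since $\E\phi(\gamma-b)^2=\Pr{\gamma>b}\ge(\tfrac1b-\tfrac1{b^3})\varphi(b)$, the ratio is at most $2\exp(-\tfrac12 b^2 s^2)$ once $b\ge\sqrt2$. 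For ReLU, write $(c\gamma-b)_+=c(\gamma-b/c)_+$ and use $\E(\gamma-\beta)_+^2=(1+\beta^2)\Pr{\gamma>\beta}-\beta\varphi(\beta)$ together with the alternating expansion $\Pr{\gamma>\beta}=\varphi(\beta)\big(\tfrac1\beta-\tfrac1{\beta^3}+\tfrac3{\beta^5}-\cdots\big)$, whose consecutive partial sums over-- and under--estimate the tail. This yields $\E(c\gamma-b)_+^2\le\tfrac{c^5}{b^3}\varphi(b/c)(2+O(b^{-2}))$ and $\E(\gamma-b)_+^2\ge\tfrac1{b^3}\varphi(b)(2-O(b^{-2}))$; with $c\le1$ and $\varphi(b/c)\le\varphi(b)\exp(-\tfrac12 b^2 s^2)$ the ratio is again at most $2\exp(-\tfrac12 b^2 s^2)$ for $b$ exceeding an absolute constant. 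Combining the two cases finishes the proof.

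\emph{Where the difficulty lies.} There is no deep obstacle; the single real idea is the pointwise domination by $\phi(c\gamma_1-b)^2$, which treats the two activations uniformly and eliminates the second coordinate. The only delicate bookkeeping is recovering the exact constant $2$ in the ReLU case, which forces one to keep the second--order term $+3/\beta^5$ in the Gaussian tail expansion (the first--order expansion is too lossy), and to check that every estimate survives for all separations $\|x-x'\|_2\in(0,2]$, including the near--antipodal regime $c\to0$ (the endpoint $c=0$ is trivial, and the blow--up of the normalization $1/c$ for small $c$ is absorbed by $c\le1$ and $\varphi(b/c)\le\varphi(b)\exp(-\tfrac12 b^2 s^2)$).
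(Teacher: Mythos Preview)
Your proof is correct and follows essentially the same approach as the paper's: your $c\gamma_1$ is exactly the paper's $\ip{g}{u}$ with $u=(x+x')/2$, and the pointwise domination $\phi(\ip{g}{x}-b)\phi(\ip{g}{x'}-b)\le\phi(\ip{g}{u}-b)^2$ is the same key step. The only cosmetic difference is that the paper packages your final Gaussian ratio estimate as a separate ``stability lemma'' (so it can be reused in the $\e$-orthogonal case), whereas you do the Mills-ratio computation inline.
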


\begin{proof}
{\em Step 1. Orthogonal decomposition.} 
Consider the vectors
$$
u := \frac{x+x'}{2}, \quad v := \frac{x-x'}{2}.
$$
Then $u$ and $v$ are orthogonal and 
$x= u+v$, $x' = u-v$.
We claim that
\begin{equation}	\label{eq: correlation deterministic}
\phi \big( \ip{z}{x} - b \big) \, \phi \big( \ip{z}{x'} - b \big) 
\le \big( \phi \big( \ip{z}{u} - b \big) \big)^2
\quad \text{for any } z \in \R^n.
\end{equation}

To check this claim, note that if both $\ip{z}{x}$ and $\ip{z}{x'}$ are greater than $b$
so is $\ip{z}{u}$. Expressing this implication as
\begin{equation}			\label{eq: separation rapid decay threshold}
\one_{\ip{z}{x}>b} \, \one_{\ip{z}{x'}>b} 
\le \one_{\ip{z}{u}>b},
\end{equation}
we conclude that \eqref{eq: correlation deterministic} 
holds for the threshold nonlinearity $\phi(t) = \one_{\{t>0\}}$.

To prove \eqref{eq: correlation deterministic} for the ReLU nonlinearity, note that
$$
\big( \ip{z}{x} - b \big) \, \big( \ip{z}{x'} - b \big) 
= \big( \ip{z}{u+v} - b \big) \, \big( \ip{z}{u-v} - b \big) 
= \big( \ip{z}{u} - b \big)^2 - \ip{z}{v}^2
\le \big( \ip{z}{u} - b \big)^2.
$$
Combine this bound with \eqref{eq: separation rapid decay threshold}
to get 
$$
\big( \ip{z}{x} - b \big) \, \big( \ip{z}{x'} - b \big) \, \one_{\ip{z}{x}>b} \, \one_{\ip{z}{x'}>b} 
\le \big( \ip{z}{u} - b \big)^2 \, \one_{\ip{z}{u}>b}.
$$
This yields \eqref{eq: correlation deterministic} for the ReLU nonlinearity $\phi(t) = t \, \one_{\{t>0\}}$.

\medskip

\noindent {\em Step 2. Taking expectation.}
Substitute $z = g \sim N(0,I_n)$ into the bound \eqref{eq: correlation deterministic}
and take expectation on both sides. We get
\begin{equation}	\label{eq: almost correlation}
\E \phi \big( \ip{g}{x} - b \big) \, \phi \big( \ip{g}{x'} - b \big) 
\le \E \phi \big( \ip{g}{u} - b \big)^2.
\end{equation}
Denote
$$
\d := \|v\|_2 = \frac{\|x-x'\|_2}{2}.
$$
Since $x = u+v$ is a unit vector and $u, v$ are orthogonal, we have $1 = \|u\|_2^2 + \|v\|_2^2$
and thus $\|u\|_2 = \sqrt{1-\d^2}$. Therefore, the random variable $\ip{g}{u}$ in the right side 
of \eqref{eq: almost correlation} is distributed identically with $\gamma \sqrt{1-\d^2}$
where $\gamma \sim N(0,1)$, 
and we obtain
\begin{equation}	\label{eq: factor still inside}
\E \phi \big( \ip{g}{x} - b \big) \, \phi\big( \ip{g}{x'} - b \big) 
\le \E \phi \big( \gamma \sqrt{1-\d^2} - b \big)^2.
\end{equation}

\medskip

\noindent {\em Step 3. Stability.}
Now use the stability property of the normal distribution, which we  
state in Lemma~\ref{lem: stability}. For $a=b$ larger than 
a suitable absolute constant, $z = -\d^2$, 
and for either the threshold or ReLU nonlinearity $\phi$, we see that
$$
\frac{\E \phi(\gamma \sqrt{1-\d^2} - b)^2}{\E \phi(\gamma - b)^2}
\le 2 \exp \bigg( -\frac{b^2 \d^2}{2(1-\d^2)} \bigg) (1-\d^2)^{3/2}
\le 2 \exp \bigg( -\frac{b^2 \d^2}{2} \bigg).
$$
Combine this with \eqref{eq: factor still inside} to complete the proof. 
\end{proof}

\subsection{Correlation for almost orthogonal vectors} \label{s: correlation almost orthogonal}

We continue to study the covariance structure of the random process $Z_x$. 
If $x$ and $x'$ are orthogonal, $Z_x$ and $Z_{x'}$ are independent and we have
$$
\E Z_x Z_{x'} = \big[ \E Z_x \big]^2.
$$
In this subsection, we show the stability of this equality. 
The result below implies that if $x$ and $x'$ are almost orthogonal, namely
$\abs{\ip{x}{x'}} \ll b^{-2}$, then
$$
\E Z_x Z_{x'} \lesssim \big[ \E Z_x \big]^2.
$$

\begin{proposition}[Correlation for $\e$-orthogonal vectors]	\label{prop: correlation epsilon-orthogonal}
  Consider a pair of vectors $u, u' \in \R^m$ satisfying 
  $$
  \abs[1]{\|u\|_2^2-1} \le \e, \quad  
  \abs[1]{\|u'\|_2^2-1} \le \e, \quad  
  \abs[1]{\ip{u}{u'}} \le \e 
  $$
  for some $\e \in (0,1/8)$.
  Let $b \in \R$ be a number that is larger than a certain absolute constant.
  Then  
  \begin{gather*}
  \E \phi \big( \ip{g}{u} - b \big)^2 
  	\ge \frac{1}{2} \exp(-b^2 \e) \, \E \phi(\gamma - b)^2; \\
  \E \phi \big( \ip{g}{u} - b \big) \, \phi \big( \ip{g}{u'} - b \big) 
  	\le 2 \exp(2b^2 \e) \, \big[ \E \phi(\gamma - b) \big]^2
  \end{gather*}
  where $g \sim N(0,I_m)$ and $\gamma \sim N(0,1)$.
\end{proposition}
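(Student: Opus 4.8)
The plan is to handle the two claims separately. The first is a one-line consequence of Gaussian stability: since $\ip{g}{u}$ is centered Gaussian with variance $\|u\|_2^2 = 1+z$ for some $|z|\le\e$, it is distributed as $\sqrt{1+z}\,\gamma$. For $z\ge 0$ the quantity $\E\phi(\sqrt{1+z}\,\gamma - b)^2$ is nondecreasing in the variance $1+z$ (for either nonlinearity), hence already $\ge\E\phi(\gamma-b)^2\ge\tfrac12 e^{-b^2\e}\E\phi(\gamma-b)^2$. For $z\in(-\e,0)$ I would apply Lemma~\ref{lem: stability} with $a=b$, which yields the lower bound $\tfrac12(1+z)^{3/2}\exp\!\big(\tfrac{b^2 z}{2(1+z)}\big)\E\phi(\gamma-b)^2$; it then remains to check that $(1-\e)^{3/2}\exp\!\big(-\tfrac{b^2\e}{2(1-\e)}\big)\ge e^{-b^2\e}$, which holds once $\e<1/8$ and $b$ exceeds an absolute constant (the worst case being $z=-\e$).

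For the second inequality, the key point is that $(\ip{g}{u},\ip{g}{u'})$ is a centered Gaussian vector with covariance matrix $\Sigma=\begin{pmatrix}\|u\|_2^2 & \ip{u}{u'}\\ \ip{u}{u'} & \|u'\|_2^2\end{pmatrix}$, whose entries are within $\e$ of those of $I_2$, so $\|\Sigma-I_2\|\le 2\e$, hence $\lambda_{\max}(\Sigma)\le 1+2\e$ and $\det\Sigma = \|u\|_2^2\|u'\|_2^2-\ip{u}{u'}^2\ge(1-\e)^2-\e^2 = 1-2\e$. Writing the expectation against the $N(0,\Sigma)$ density, and noting that on $\{s>b,\,t>b\}$ we have $\phi(s-b)\phi(t-b)=1$ (threshold) or $=(s-b)(t-b)\ge 0$ (ReLU),
\[
\E\,\phi(\ip{g}{u}-b)\,\phi(\ip{g}{u'}-b)=\frac{1}{2\pi\sqrt{\det\Sigma}}\int_{b}^{\infty}\!\!\int_{b}^{\infty}\phi(s-b)\,\phi(t-b)\,e^{-\frac12(s,t)\Sigma^{-1}(s,t)^{\tran}}\,ds\,dt.
\]
I would then bound the quadratic form from below by $(s,t)\Sigma^{-1}(s,t)^{\tran}\ge (s^2+t^2)/\lambda_{\max}(\Sigma)$, which makes the double integral \emph{decouple} into the square of a one-dimensional Gaussian tail integral with the rescaled threshold $b':=b/\sqrt{\lambda_{\max}(\Sigma)}\ge b/(1+\e)$. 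After the scaling (and, for ReLU, the rescaling of the linear factors) this gives
\[
\E\,\phi(\ip{g}{u}-b)\,\phi(\ip{g}{u'}-b)\;\le\;\frac{\lambda_{\max}(\Sigma)^{j}}{\sqrt{\det\Sigma}}\,\big(\E\,\phi(\gamma-b')\big)^2,
\]
with $j=1$ for the threshold and $j=2$ for ReLU.

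It then remains to compare $\E\phi(\gamma-b')$ with $\E\phi(\gamma-b)$, where $\Delta:=b-b'\le\tfrac{\e}{1+\e}b\le\e b$. The mechanism is the standard fact that lowering a Gaussian threshold $b$ by $\Delta$ multiplies $\Pr{\gamma>b}$ — and likewise the truncated moment $\E(\gamma-b)_+=\varphi(b)-b\bar\Phi(b)$ in the ReLU case — by at most $(1+o(1))e^{b\Delta}\le(1+o(1))e^{b^2\e}$, which I would obtain from the two-sided Mills estimates $\tfrac{\varphi(x)}{x+1/x}<\bar\Phi(x)<\tfrac{\varphi(x)}{x}$ and their truncated-moment analogues. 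Squaring contributes the factor $\exp(2b^2\e)$, and collecting the remaining absolute constants $\lambda_{\max}(\Sigma)^{j}/\sqrt{\det\Sigma}\le(1+2\e)^2/\sqrt{1-2\e}$ together with the $(1+o(1))$ terms — all under $\e<1/8$ and $b$ larger than an absolute constant — should land on the claimed bound $2\exp(2b^2\e)\,[\E\phi(\gamma-b)]^2$.

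The exponential factor is robust: any $1\pm O(\e)$ rescaling of a Gaussian threshold $b$ perturbs its tail by $\exp(O(\e b^2))$, so I expect the main difficulty to be the bookkeeping of the \emph{absolute constant}. In particular the crude bound $\bar\Phi(x)\le\tfrac12 e^{-x^2/2}$ is too lossy — measured against $\bar\Phi(b)^2$ it costs a polynomial-in-$b$ factor — so the sharp two-sided Mills estimates must be used throughout, and for ReLU one must also carry the constants in $\E(\gamma-c)_+$ and $\E(\gamma-c)_+^2$. I note also that the pointwise trick of the separated case (dominating $\phi(\ip{z}{u}-b)\phi(\ip{z}{u'}-b)$ by $\phi(\ip{z}{m}-b)^2$ with $m=(u+u')/2$) is available here but useless, since $\|m\|_2\approx 1/\sqrt2$ is mismatched with the threshold $b$ and again costs a polynomial factor; hence the Gaussian-integral route above — or, equivalently, conditioning on the component of $g$ along $u$, which may give slightly sharper constants — seems to be the right path.
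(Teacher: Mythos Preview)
Your proposal is correct and follows essentially the same route as the paper: for the first inequality both arguments reduce to $\E\phi(\gamma\sqrt{1-\e}-b)^2$ via monotonicity and then invoke the stability Lemma~\ref{lem: stability}, and for the second both bound the bivariate Gaussian density using $\det\Sigma\ge 1-2\e$ and $\Sigma^{-1}\succeq(1+2\e)^{-1}I_2$ to decouple the integral. The only cosmetic difference is that the paper packages the decoupling step as a separate lemma (Lemma~\ref{lem: psi correlation}) yielding $\sqrt{\tfrac{1+2\e}{1-2\e}}\,[\E\phi(\gamma\sqrt{1+2\e}-b)]^2$ and then applies the stability lemma to pull out the $\sqrt{1+2\e}$ factor, whereas you substitute $b'=b/\sqrt{\lambda_{\max}}$ first and then compare $\E\phi(\gamma-b')$ to $\E\phi(\gamma-b)$ via Mills ratios---these are the same computation up to whether one rescales the Gaussian or the threshold.
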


In order to prove this proposition, we first establish a more general stability property:

\begin{lemma}			\label{lem: psi correlation}
  Let $\e \in (0,1/2)$ and let 
  $u$, $u'$, $g$, and $\gamma$ be as in Proposition~\ref{prop: correlation epsilon-orthogonal}.
  Then, for any measurable function $\psi : \R \to [0,\infty)$ we have
  $$
  \E \psi(\ip{g}{u}) \, \psi(\ip{g}{u'}) 
  \le \sqrt{\frac{1+2\e}{1-2\e}} \; \Big[ \E \psi \big( \gamma \sqrt{1+2\e} \big) \Big]^2.
  $$
\end{lemma}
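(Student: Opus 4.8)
\emph{Overall plan.} The plan is to observe that $\big(\ip{g}{u},\ip{g}{u'}\big)$ is a centered Gaussian vector in $\R^2$, so the left-hand side of the lemma is the integral of $\psi(s)\psi(t)$ against its two-dimensional density $f_\Sigma$, and then to dominate $f_\Sigma$ \emph{pointwise} by a constant times a product density $r(s)\,r(t)$ with $r\ge 0$. Because $\psi\ge 0$, integrating such a pointwise bound against $\psi(s)\psi(t)$ immediately produces an inequality of exactly the form in the lemma, with $\int\psi\,r$ in place of $\E\psi(\gamma\sqrt{1+2\e})$. So the entire statement reduces to one two-dimensional Gaussian density estimate, whose only content is two elementary facts about a $2\times2$ covariance matrix.

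\emph{The two matrix facts.} First I would set $(X,Y):=\big(\ip{g}{u},\ip{g}{u'}\big)$, a centered Gaussian vector with covariance
\[
\Sigma=\begin{pmatrix}\|u\|_2^2 & \ip{u}{u'}\\[2pt]\ip{u}{u'}&\|u'\|_2^2\end{pmatrix},
\]
and read off from the hypotheses that
\[
\det\Sigma=\|u\|_2^2\|u'\|_2^2-\ip{u}{u'}^2\ \ge\ (1-\e)^2-\e^2\ =\ 1-2\e\ >\ 0,
\]
where positivity uses $\e<1/2$; in particular $\Sigma$ is invertible, so $(X,Y)$ has density $f_\Sigma(v)=\big(2\pi\sqrt{\det\Sigma}\big)^{-1}\exp\big(-\tfrac12 v^{\tran}\Sigma^{-1}v\big)$. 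Second, for every unit $w\in\R^2$,
\[
w^{\tran}\Sigma w\ \le\ (1+\e)\big(w_1^2+w_2^2\big)+\e\cdot 2|w_1w_2|\ \le\ 1+2\e,
\]
i.e.\ $\Sigma\preceq(1+2\e)I$, and therefore $v^{\tran}\Sigma^{-1}v\ge |v|^2/(1+2\e)$ for all $v\in\R^2$. Note that one is forced to compare with a Gaussian whose variance is at least $\|\Sigma\|$; this is precisely why the factor $\sqrt{1+2\e}$ appears inside $\psi$ in the statement.

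\emph{Conclusion.} Next, letting $r$ denote the density of $N(0,1+2\e)$ (equivalently, of $\gamma\sqrt{1+2\e}$), the two facts combine into the pointwise density bound
\[
f_\Sigma(s,t)\ \le\ \frac{1}{2\pi\sqrt{1-2\e}}\exp\!\Big(-\frac{s^2+t^2}{2(1+2\e)}\Big)\ =\ \frac{1+2\e}{\sqrt{1-2\e}}\;r(s)\,r(t).
\]
Since $\psi\ge 0$, multiplying by $\psi(s)\psi(t)$ and integrating (Tonelli) gives
\[
\E\,\psi(\ip{g}{u})\,\psi(\ip{g}{u'})=\iint\psi(s)\psi(t)\,f_\Sigma(s,t)\,ds\,dt\ \le\ \frac{1+2\e}{\sqrt{1-2\e}}\Big(\int\psi(s)\,r(s)\,ds\Big)^{2}=\frac{1+2\e}{\sqrt{1-2\e}}\,\big[\E\psi(\gamma\sqrt{1+2\e})\big]^{2},
\]
which is the asserted estimate; the constant $\tfrac{1+2\e}{\sqrt{1-2\e}}$ obtained this way is of the same order as the $\sqrt{(1+2\e)/(1-2\e)}$ in the statement and can be used in its place throughout the sequel.

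\emph{Where the care goes.} There is no deep obstacle in this argument; the one thing one must be careful about is that the right-hand side of the lemma is a \emph{square of an expectation}, so the density has to be dominated by a genuine \emph{product} $r(s)r(t)$ -- in particular, the naive bound $\E\psi(X)\psi(Y)\le\sqrt{\E\psi(X)^2\,\E\psi(Y)^2}$ from Cauchy--Schwarz is useless here, since it replaces $[\E\psi]^2$ by $\E\psi^2$. Insisting on a product forces an isotropic comparison Gaussian, which in turn means one must keep the normalizing constant under control (via the determinant lower bound $\det\Sigma\ge 1-2\e$) and the Gaussian tail under control (via the operator-norm upper bound $\|\Sigma\|\le 1+2\e$) at the same time; the point is that both of these extremes are governed by the single small parameter $\e$.
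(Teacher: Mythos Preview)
Your argument is correct and is essentially identical to the paper's own proof: both write $\big(\ip{g}{u},\ip{g}{u'}\big)$ as a two-dimensional Gaussian with covariance $\Sigma$, use $\det\Sigma\ge 1-2\e$ and $\Sigma\preceq(1+2\e)I$ to dominate the density pointwise by an isotropic Gaussian, and then integrate against $\psi(s)\psi(t)$. Your observation about the constant is also right: the argument (yours and the paper's) actually produces the factor $\tfrac{1+2\e}{\sqrt{1-2\e}}$, which is slightly larger than the $\sqrt{(1+2\e)/(1-2\e)}$ appearing in the statement; this has no effect downstream since the constant is absorbed in the proof of Proposition~\ref{prop: correlation epsilon-orthogonal}.
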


\begin{proof}
Consider the $2 \times m$ matrix $A$
whose rows are $u$ and $u'$, and define the function 
$$
\Psi: \R^2 \to \R, \quad \Psi(x) := \psi(x_1) \, \psi(x_2).
$$
Since the vector $Ag$ has coordinates
$\ip{g}{u}$ and $\ip{g}{u'}$, we have $\Psi(Ag) = \psi(\ip{g}{u}) \, \psi(\ip{g}{u'})$.
Thus
\begin{equation} 	\label{eq: E psi psi}
\E \psi(\ip{g}{u}) \, \psi(\ip{g}{u'}) 
= \E \Psi(Ag)
= \frac{1}{2\pi \sqrt{\det(\Sigma)}} \int_{\R^2} \Psi(x) 
	\exp \bigg(-\frac{x^\tran \Sigma^{-1} x}{2}\bigg) \; dx
\end{equation}
where 
$$
\Sigma
= \Cov(Ag)
= AA^\tran = 
\begin{bmatrix}
  \|u\|_2^2 & \ip{u}{u'} \\
  \ip{u}{u'} & \|u'\|_2^2
\end{bmatrix}.
$$
The assumptions on $z,z'$ then give
\begin{equation}	\label{eq: remove Sigma}
\det(\Sigma) 
\ge 1 - 2\e 
\quad \text{and} \quad
x^\tran \Sigma^{-1} x 
\ge \frac{\|x\|_2^2}{1+2\e}
\quad \text{for all } x \in \R^2.
\end{equation}
Indeed, the first bound is straightforward.
To verify the second bound, note that each entry of the matrix $\Sigma-I_2$ 
is bounded in absolute value by $\e$. Thus, the operator norm of $\Sigma - I_2$ 
is bounded by $2\e$, which we can write as $\Sigma-I_2 \preceq 2\e I_2$ 
in the positive-semidefinite order. This implies that $\Sigma^{-1} \succeq (1+2\e)^{-1} I_2$,
and multiplying both sides of this relation by $x^\tran$ and $x$, we get
the second bound in \eqref{eq: remove Sigma}. 

Substitute \eqref{eq: remove Sigma} into \eqref{eq: E psi psi} to obtain
\begin{align*} 
\E \psi(\ip{g}{x}) \, \psi(\ip{g}{x'})
&\ge \frac{1}{2\pi \sqrt{1-2\e}} \int_{\R^2} \Psi(x) \exp \bigg(-\frac{\|x\|_2^2}{2(1+2\e)}\bigg) \, dx 
 = \sqrt{\frac{1+2\e}{1-2\e}} \E \Psi \big( h \sqrt{1+2\e} \big)
\end{align*}
where $h = (h_1,h_2) \sim N(0,I_2)$. 

It remains to recall that $\Psi(x) = \psi(x_1) \, \psi(x_2)$, so
$$
\E \Psi \big( h \sqrt{1+2\e} \big) 
= \E \psi \big( h_1 \sqrt{1+2\e} \big) \, \psi \big( h_2 \sqrt{1+2\e} \big)
= \Big[ \E \psi \big( \gamma \sqrt{1+2\e} \big) \Big]^2
$$
by independence.
Lemma~\ref{lem: psi correlation} is proved.
\end{proof}

\medskip

\begin{proof}[Proof of Proposition~\ref{prop: correlation epsilon-orthogonal}]
By assumption, $\|u\|_2 \ge \sqrt{1-\e}$, so 
\begin{equation}	\label{eq: gz-b factor inside}
\E \phi \big( \ip{g}{u} - b \big)^2
= \E \phi \big( \gamma \norm{u}_2 - b \big)^2
\ge \E \phi \big( \gamma \sqrt{1-\e} - b \big)^2,
\end{equation}
where the last inequality follows by monotonicity;
see Lemma~\ref{lem: monotonicity} for justification.
Now we use the stability property of the normal distribution 
that we state in Lemma~\ref{lem: stability}. 
For $a = b$ larger than a suitable absolute constant and $z = -\e$, 
it gives for both threshold of ReLU nonlinearities the following:
$$
\frac{\E \phi(\gamma \sqrt{1-\e} - b)^2}{\E \phi(\gamma - b)^2}
\ge 0.9 \exp \bigg( -\frac{b^2 \e}{2(1-\e)} \bigg) (1-\e)^{3/2}
\ge \frac{1}{2} \exp(-b^2 \e),
$$
where the last bound follows since $\e \le 1/8$.
Combining this with \eqref{eq: gz-b factor inside}, we obtain the first conclusion of the lemma.

Next, Lemma~\ref{lem: psi correlation} gives
\begin{equation}	\label{eq: epsilon is still there}
\E \phi \big( \ip{g}{u} - b \big) \, \big( \ip{g}{u'} - b \big) 
\le \sqrt{\frac{1+2\e}{1-2\e}} \; \Big[ \E \phi \big( \gamma \sqrt{1+2\e} - b \big) \Big]^2.
\end{equation}
Now we again use the stability property of the normal distribution, 
Lemma~\ref{lem: stability}, this time for $z = 2\e$. It gives
for both threshold of ReLU nonlinearities the following:
$$
\frac{\E \phi(\gamma \sqrt{1+2\e} - b)}{\E \phi(\gamma - b)}
\le 1.01 \exp \bigg( \frac{2b^2 \e}{2(1+2\e)} \bigg) (1+2\e)^{3/2}
\le 1.01 (1+2\e)^{3/2} \exp(b^2 \e).
$$
Combining this with \eqref{eq: epsilon is still there} gives
$$
\frac{\E \phi \big( \ip{g}{u} - b \big) \, \phi \big( \ip{g}{u'} - b \big)}{\big[ \E \phi(\gamma - b) \big]^2}
\le \sqrt{\frac{1+2\e}{1-2\e}} \Big( 1.01 (1+2\e)^{3/2} \exp(b^2 \e) \Big)^2 
\le 2 \exp(2 b^2 \e),
$$
where the last step follows since $\e \le 1/8$.
This completes the proof of Proposition~\ref{prop: correlation epsilon-orthogonal}.
\end{proof}

\section{Deviation}			\label{s: deviation}

In the previous section, we studied the covariance of the random process
$$
Z_x := \phi(\ip{g}{x} - b), \quad x \in \R^n,
$$
where $\phi$ is either the threshold of ReLU nonlinearity as in \eqref{eq: nonlinearity},
$g \sim N(0,I_n)$ is a standard normal random variable, 
and $b \in \R$ is a fixed value. 
Consider a multivariate version of this process,
a random pseudolinear map $\Phi: \R^n \to \R^m$
whose $m$ components are independent copies of $Z_x$. 
In other words, define
$$
\Phi(x) := \Big( \phi \big( \ip{g_i}{x} - b \big) \Big)_{i=1}^m
\quad \text{for } x \in \R^n,
$$
where $g_i \sim N(0,I_n)$ are independent standard normal random vectors.

We are interested in how the map $\Phi$ transforms the distances 
between different points. Since
$$
\E \ip{\Phi(x)}{\Phi(x')} = m \E Z_x Z_{x'}, 
$$
the bounds on $\E Z_x Z_{x'}$ we proved in the previous section
describe the behavior of $\Phi$ in expectation. 
In this section, we use standard concentration inequalities to ensure 
a similar behavior with high probability.

\begin{lemma}[Deviation]			\label{lem: deviation}
  Consider a pair of vectors $x, x' \in \R^n$ such that 
  $\|x\|_2 \le 2$, $\|x'\|_2 \le 2$, and let
  $b \in \R$ be a number that is larger than a certain absolute constant. Define
  \begin{equation}	\label{eq: p}
    p := \E \phi \big( \ip{g}{x} - b \big) \, \phi \big( \ip{g}{x'} - b \big), 
    \quad \text{where } g \sim N(0,I_n).
  \end{equation}
  Then for every $N \ge 2$, with probability at least $1-2mN^{-5}$ we have
  $$
  \big| \ip{\Phi(x)}{\Phi(x')} - mp \big| 
  \le C_1 \big( \sqrt{mp} \, \log N + \log^2 N \big).
  $$
\end{lemma}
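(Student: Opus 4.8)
The plan is to write $\ip{\Phi(x)}{\Phi(x')} = \sum_{i=1}^m \phi(\ip{g_i}{x}-b)\,\phi(\ip{g_i}{x'}-b)$ as a sum of $m$ i.i.d. nonnegative random variables $Y_i := \phi(\ip{g_i}{x}-b)\,\phi(\ip{g_i}{x'}-b)$, each with mean $p$, and to apply a Bernstein-type concentration inequality. The two cases (threshold and ReLU) differ only in the tail behavior of a single $Y_i$: for the threshold nonlinearity $Y_i$ is a Bernoulli random variable with parameter $p$, so $Y_i \le 1$ deterministically and $\Var(Y_i) \le p$; for ReLU, $Y_i = (\ip{g_i}{x}-b)_+(\ip{g_i}{x'}-b)_+$ is unbounded but has a sub-exponential (indeed, after accounting for the indicator that both arguments exceed $b$, a shifted Gaussian$\times$Gaussian) tail, so I would record that its $\psi_1$-type Orlicz norm, and more usefully the moment bound $\E Y_i^k \le k!\,C^k\,p$ for a suitable absolute constant $C$ — the key point being that each moment carries only one factor of $p$, since $Y_i$ vanishes unless both $\ip{g_i}{x}>b$ and $\ip{g_i}{x'}>b$, an event of probability at most $\sqrt{\E Z_x^2 \E Z_{x'}^2}\lesssim$ (something comparable to $p$ up to the separation/orthogonality regime), while conditioned on that event the factors $(\ip{g_i}{x}-b)$ and $(\ip{g_i}{x'}-b)$ have only polynomial-in-$b$, hence $O(1)$ after the constant is absorbed, conditional moments because $b$ exceeds an absolute constant and the overshoot of a Gaussian above a high level is $O(1/b)$. (Strictly I should be a bit careful: $\|x\|_2,\|x'\|_2\le 2$ only, so the relevant level is $b/2$; this changes nothing since $b/2$ still exceeds an absolute constant.)

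With the moment bound $\E Y_i^k \le k!\,C^k\,p$ in hand (valid for both nonlinearities, trivially so for threshold where $Y_i\le 1$ and $\E Y_i = p$), Bernstein's inequality for sums of i.i.d. random variables with controlled moments gives, for all $t\ge 0$,
$$
\Pr{\abs[1]{\textstyle\sum_i Y_i - mp} \ge t} \le 2\exp\!\left(-c\,\min\!\left(\frac{t^2}{mp\,C^2},\ \frac{t}{C}\right)\right).
$$
Choosing $t = C_1\big(\sqrt{mp}\,\log N + \log^2 N\big)$ makes the quadratic term of the exponent at least $\asymp \log^2 N$ in the regime $t \lesssim mp$ and the linear term at least $\asymp \log^2 N$ in the regime $t \gtrsim mp$; in either case the exponent is $\gtrsim \log^2 N \ge 5\log N$ for $N\ge 2$ once $C_1$ is large enough, so the failure probability is at most $2N^{-5}$. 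That already beats the claimed $2mN^{-5}$; the factor $m$ in the statement is simply slack (or can be produced by a cruder union-bound argument handling the ReLU tail, see below), so there is nothing to lose by keeping it.

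The main obstacle is the ReLU moment estimate $\E Y_i^k \le k!\,C^k\,p$ — verifying that the "only one power of $p$" phenomenon really holds. The clean way is: $Y_i = 0$ off the event $\mathcal{E}_i := \{\ip{g_i}{x}>b,\ \ip{g_i}{x'}>b\}$, and on $\mathcal{E}_i$ we have $Y_i \le (\ip{g_i}{x}-b)(\ip{g_i}{x'}-b) \le \tfrac12\big((\ip{g_i}{x}-b)^2 + (\ip{g_i}{x'}-b)^2\big) \le (\ip{g_i}{x}-b)^2 + (\ip{g_i}{x'}-b)^2$ when both factors are positive; hence $\E Y_i^k \le 2^{k}\big(\E[(\ip{g_i}{x}-b)_+^{2k}\ind_{\mathcal{E}_i}] + \E[(\ip{g_i}{x'}-b)_+^{2k}\ind_{\mathcal{E}_i}]\big)$, and by Cauchy–Schwarz each term is $\le \sqrt{\E(\ip{g_i}{x}-b)_+^{4k}}\cdot\sqrt{\Pr{\mathcal{E}_i}}$. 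Now $\Pr{\mathcal{E}_i}\le \Pr{\ip{g_i}{x}>b} \le \E Z_x$ which — comparing $\E Z_x$, $\E Z_x^2$, and $p$ via the Cauchy–Schwarz and stability bounds of the previous section, or just absorbing everything into whichever of these quantities one calls $p$ in \eqref{eq: p} — is $\lesssim p$ up to the relevant regime constants (this is exactly the point where I would be most careful and possibly need to state a short auxiliary comparison $\E Z_x^2 \asymp \E Z_x$ for high $b$, which follows from the $O(1/b)$ overshoot). The Gaussian moment $\E(\ip{g_i}{x}-b)_+^{4k} = \|x\|_2^{4k}\,\E(\gamma - b/\|x\|_2)_+^{4k}$ is bounded, via a standard tail computation, by $\|x\|_2^{4k}\,(Ck)^{2k}\,e^{-b^2/(2\|x\|_2^2)} \le (Ck)^{2k}\,e^{-b^2/8}$, and $e^{-b^2/8}\lesssim \Pr{\ip{g_i}{x}>b}^{1/2}\lesssim p^{1/2}$ by another tail estimate; multiplying the two square roots gives the single $\sqrt{p}\cdot\sqrt{p} = p$ and the $(Ck)^{k}\lesssim k!\,C_2^k$ factorial growth, as needed. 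All of this is routine Gaussian tail bookkeeping once set up; the only genuine subtlety is keeping the powers of $p$ honest, which is why I would isolate the comparison between $\E Z_x$, $\E Z_x^2$ and $p$ as a named sub-claim before invoking Bernstein.
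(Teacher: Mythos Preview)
Your plan is right for the threshold nonlinearity: there $Y_i$ is Bernoulli$(p)$, so $\Var(Y_i)\le p$ and $|Y_i|\le 1$, and the bounded Bernstein inequality gives exactly the stated bound. This is also what the paper does in that case.

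For ReLU, however, the moment bound $\E Y_i^k \le k!\,C^k p$ you aim for is \emph{false}, and your Cauchy--Schwarz argument cannot be repaired. Take unit orthogonal $x,x'$, so $\gamma_i,\gamma_i'$ are independent $N(0,1)$. Then
\[
p=\bigl(\E(\gamma-b)_+\bigr)^2\asymp \frac{e^{-b^2}}{b^4},
\qquad
\E Y_i^k=\bigl(\E(\gamma-b)_+^k\bigr)^2\asymp \frac{e^{-b^2}(k!)^2}{b^{2k+2}},
\]
so $\E Y_i^k/p\asymp (k!)^2/b^{2k-2}$, which for fixed $b$ grows like $(k!)^2$, not $k!\,C^k$. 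The reason your Cauchy--Schwarz step fails is that $\sqrt{\E(\gamma-b)_+^{4k}}$ always carries a factor $e^{-b^2/4}$, independent of $x'$; when $x,x'$ are nearly orthogonal, $p$ is of order $e^{-b^2}$, and the missing $e^{-b^2/4}$ cannot be recovered from $\sqrt{\Pr{\mathcal E_i}}$. In particular your chain ``$\Pr{\ip{g_i}{x}>b}\lesssim p$'' breaks here: $\Pr{\gamma>b}\asymp e^{-b^2/2}/b$ is exponentially larger than $p\asymp e^{-b^2}/b^4$.

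The paper sidesteps this by \emph{truncating} rather than by moment bounds. Set $M=C_2\sqrt{\log N}$ and split $Y_i=Z_i+R_i$ where $Z_i=Y_i\,\ind_{\{\gamma_i\le M,\ \gamma_i'\le M\}}$. The residual $R_i$ is nonzero with probability at most $2\Pr{\gamma>M/2}\le N^{-10}$, so a union bound over $i$ handles $\sum R_i$ (this is where the factor $m$ in $2mN^{-5}$ comes from). For the main part one has the pointwise bound $0\le Z_i\le \phi(M-b)^2\le M^2$, and --- this is the key trick that replaces your moment estimate --- $Z_i^2\le M^2 Z_i\le M^2\,\phi(\gamma_i-b)\phi(\gamma_i'-b)$, whence $\Var(Z_i)\le \E Z_i^2\le M^2 p$. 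Now the \emph{bounded} Bernstein inequality applies with variance proxy $M^2 mp$ and range $M^2$, giving deviation $\lesssim M\sqrt{mp\,s}+M^2 s$ with $s\asymp\log N$, i.e.\ $\sqrt{mp}\log N+\log^2 N$. The single factor of $p$ in the variance comes for free from $Z_i^2\le M^2 Z_i$, with no need to compare $\Pr{\mathcal E_i}$ to $p$.
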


\begin{proof}
{\em Step 1. Decomposition and truncation.}
By construction, $\E \ip{\Phi(x)}{\Phi(x')} = mp$.
The deviation from the mean is
\begin{equation}	\label{eq: ip as sum}
\ip{\Phi(x)}{\Phi(x')} - mp
= \sum_{i=1}^m \phi \big( \gamma_i - b \big) \, \phi \big( \gamma'_i - b \big) - mp
\end{equation}
where $\gamma_i := \ip{g_i}{x}$ and $\gamma'_i := \ip{g_i}{x'}$. 
These two normal random variables are possibly correlated, 
and each has zero mean and variance bounded by $4$.

We will control the sum of i.i.d. random variables in \eqref{eq: ip as sum} 
using Bernstein's concentration inequality.
In order to apply it, we first perform a standard truncation of the terms of the sum. 
The level of truncation will be 
\begin{equation}	\label{eq: truncation level}
M := C_2 \sqrt{\log N}
\end{equation}
where $C_2$ is a sufficiently large absolute constant.
Consider the random variables
\begin{align*}
Z_i &:= \phi(\gamma_i - b) \, \phi(\gamma'_i - b) \,
	\one_{\{ \gamma_i \le M \text{ and } \gamma'_i \le M\}}, \\
R_i &:= \phi(\gamma_i - b) \, \phi(\gamma'_i - b) \,
	\one_{\{ \gamma_i > M \text{ or } \gamma'_i > M\}}.
\end{align*}
Then we can decompose the sum in \eqref{eq: ip as sum} as follows:
\begin{equation}	\label{eq: sum decomposition}
\ip{\Phi(x)}{\Phi(x')} - mp
= \sum_{i=1}^m (Z_i - \E Z_i) + \sum_{i=1}^m (R_i - \E R_i).
\end{equation}

\noindent {\em Step 2. The residual is small.}
Let us first control the residual, i.e. the second sum on the right side of \eqref{eq: sum decomposition}. 
For a fixed $i$, the probability that $R_i$ is nonzero can be bounded by
\begin{equation}	\label{eq: Ri ne 0}
\Pr{\gamma_i > M \text{ or } \gamma'_i > M} 
\le \Pr{\gamma_i > M} + \Pr{\gamma'_i > M}
\le 2 \Pr{\gamma > M/2}
\le N^{-10}
\end{equation}
where $\gamma \sim N(0,1)$.
In the second inequality, we used that $\gamma_i$ and $\gamma'_i$ 
are normal with mean zero and variance at most $4$. 
The third inequality follows from the asymptotic 
\eqref{eq: Mills ratio tail}
on the tail of the normal distribution
and our choice \eqref{eq: truncation level} of 
the truncation level $M$ with sufficiently large $C_0$.

Taking the union bound we see that all $R_i$ vanish simultaneously 
with probability at least $1-mN^{-10}$.
Furthermore, by monotonicity,
\begin{align}
\E R_i 
&\le \E \phi(\gamma_i) \, \phi(\gamma'_i) \,
	\one_{\{ \gamma_i > M \text{ or } \gamma'_i > M\}}
	\nonumber\\
&\le \big( \E \phi(\gamma_i)^4 \big)^{1/4} \, \big( \E \phi(\gamma'_i)^4 \big)^{1/4} \,
	\big( \Pr{\gamma_i > M \text{ or } \gamma'_i > M} \big)^{1/2}	
	\label{eq: ERi}
\end{align}
where in the last step we used generalized H\"older's inequality.
Now, for the threshold nonlinearity $\phi(t) = \one_{\{t >0\}}$, 
the terms $\E \phi(\gamma_i)^4$ and $\E \phi(\gamma'_i)^4$ obviously equal $1/2$, 
and for the ReLU nonlinearity $\phi(t) = t_+$ these terms 
are bounded by the fourth moment of the standard normal distribution, 
which equals $3$. Combining this with \eqref{eq: Ri ne 0} gives
$$
0 \le \E R_i \le 2N^{-5}.
$$
Summarizing, with probability at least $1-mN^{-10}$, we have
\begin{equation}	\label{eq: residual sum done}
\bigg| \sum_{i=1}^m (R_i - \E R_i) \bigg|
= \bigg| \sum_{i=1}^m \E R_i \bigg|
\le 2mN^{-5} \le 1. 
\end{equation}
The last bound holds because otherwise we have $1 - 2mN^{-5}<0$ and
the statement of the proposition holds trivially.

\medskip

\noindent {\em Step 3. The main sum is concentrated.}
To bound the first sum in \eqref{eq: sum decomposition}, we can use Bernstein's 
inequality \cite{Vbook}, which we can state as follows. If $Z_1,\ldots,Z_m$ are independent 
random variables and $s \ge 0$, then with probability at least $1-2e^{-s}$ we have
\begin{equation}	\label{eq: Bernstein}
\bigg| \sum_{i=1}^m (Z_i - \E Z_i) \bigg|
\lesssim \sigma \sqrt{s} + K s,
\end{equation}
where $\s^2 = \sum_{i=1}^m \Var(Z_i)$ and $K=\max_i \|Z_i\|_\infty$.
In our case, it is easy to check that for both threshold and ReLU nonlinearities $\phi$, 
we have 
$$
K = \|Z_1\|_\infty \le \phi(M-b)^2 \le M^2,
$$
and 
$$
\s^2 
= m \Var(Z_1)
\le m \E Z_1^2
\le M^2 m \E Z_1
\le M^2 m \E \phi (\gamma_1 - b) \, \phi (\gamma'_1 - b)
= M^2 m p
$$
by definition of $p$ in \eqref{eq: p}.
Apply Bernstein's inequality \eqref{eq: Bernstein} for
\begin{equation}	\label{eq: Bernstein s}
s = C_3 \log N
\end{equation}
where $C_3$ is a suitably large absolute constant. 
We obtain that with probability at least $1-2e^{-s} \ge 1-N^{-10}$, 
\begin{equation}	\label{eq: main sum done}
\bigg| \sum_{i=1}^m (Z_i - \E Z_i) \bigg|
\lesssim \sqrt{M^2 m p s} + M^2 s
\lesssim \sqrt{mp} \, \log N + \log^2 N.
\end{equation}
Here we used the choice of $M$ we made in \eqref{eq: truncation level}
and $s$ in \eqref{eq: Bernstein s}.

Combining the bounds on the residual \eqref{eq: residual sum done}
and on the main sum \eqref{eq: main sum done} and putting them into 
the decomposition \eqref{eq: sum decomposition}, we conclude that
with probability at least $1-2mN^{-10}$,
$$
\big| \ip{\Phi(x)}{\Phi(x')} - mp \big| 
  \lesssim \sqrt{mp} \, \log N + \log^2 N + 1
  \lesssim \sqrt{mp} \, \log N + \log^2 N.
$$
The proof is complete.
\end{proof}

\section{Enrichment}				\label{s: enrichment}

In the previous section, we defined a random pseudolinear map 
\begin{equation}	\label{eq: Phi nm}
\Phi: \R^n \to \R^m, \quad \Phi(x) := \Big( \phi \big( \ip{g_i}{x} - b \big) \Big)_{i=1}^m,
\end{equation}
where $\phi$ is either the threshold of ReLU nonlinearity as in \eqref{eq: nonlinearity},
$g_i \sim N(0,I_n)$ are independent standard normal random vectors,
and $b$ is a fixed value. 

We will now demonstrate the ability of $\Phi$ to ``enrich'' the data, to move different points away from each other. To see why this could be the case, 
choose the value of $b$ to be moderately large, say $b = 100 \sqrt{\log m}$.
Then with high probability, most of the random variables $\ip{g_i}{x}$ will fall below $b$,
making most of the coordinates $\Phi(x)$ equal zero, thus making $\Phi(x)$ a random sparse vector. 
Sparsity will tend to make $\Phi(x)$ and $\Phi(x')$ almost orthogonal even when $x$ and $x'$ are just 
a little separated from each other. 

To make this rigorous, we can use the results of Section~\ref{s: correlation decay}
to check that for such $b$, the coordinates of $\Phi(x)$ and $\Phi(x')$ 
are almost uncorrelated. This immediately implies that $\Phi(x)$ and $\Phi(x')$ are almost orthogonal 
in expectation, and the deviation inequality from Section~\ref{s: deviation} then implies 
that the same holds with high probability. This allows us to take a union bound over all data points $x_i$ and conclude that $\Phi(x_i)$ and $\Phi(x_j)$ are almost orthogonal for all distinct data points.

As in Section~\ref{s: correlation decay}, we will prove this in two regimes, first 
for the data points that are just a little separated, and then for the data points that are almost orthogonal.

\subsection{From separated to $\e$-orthogonal}

In this part we show that the random pseudolinear map $\Phi$ transforms 
separated data points into almost orthogonal points. 

\begin{lemma}[Enrichment I: from separated to $\e$-orthogonal]	\label{lem: separated to eps-orthogonal}
  Consider a pair of unit vectors $x, x' \in \R^n$ satisfying
  \begin{equation}	\label{eq: separation}
  \|x-x'\|_2 \ge C_2 \sqrt{\frac{\log(1/\e)}{\log m}}
  \end{equation}
  for some $\e \in [m^{-1/5},1/8]$. 
  Let $2 \le N \le \exp(m^{1/5})$, and let $p$ and $b$ be numbers such that   
  $$
  p = \frac{C_2 \log^2 N}{\e^2 m} =\E \phi(\gamma-b)^2.  
  $$
  Consider the random pseudolinear map $\Phi: \R^n \to \R^m$ defined in \eqref{eq: Phi nm}.
  Then with probability at least $1-4mN^{-5}$, the vectors 
  $$
  u := \frac{\Phi(x)}{\sqrt{mp}}, \quad u' := \frac{\Phi(x')}{\sqrt{mp}}
  $$
  satisfy 
  $$
  \abs[1]{\|u\|_2^2-1} \le \e, \quad   
  \abs[1]{\ip{u}{u'}} \le \e. 
  $$
\end{lemma}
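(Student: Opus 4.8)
The plan is to assemble Lemma~\ref{lem: separated to eps-orthogonal} from the three ingredients already developed: the correlation decay for separated vectors (Proposition~\ref{prop: correlation separated}), the deviation inequality (Lemma~\ref{lem: deviation}), and elementary normal-tail asymptotics. First I would fix the value of $b$ implicitly determined by the constraint $\E\phi(\gamma-b)^2 = p$; since $p \asymp \log^2 N/(\e^2 m)$ and $\e \ge m^{-1/5}$, $N \le \exp(m^{1/5})$, this $p$ is polynomially small in $m$, so $b$ is of order $\sqrt{\log m}$ — in particular larger than the absolute constant required by Propositions~\ref{prop: correlation separated} and Lemma~\ref{lem: deviation}. (A minor point to check: that such a $b$ exists, i.e. that $p$ falls in the range of the continuous decreasing function $b \mapsto \E\phi(\gamma-b)^2$; this is immediate.)

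Next I would handle $\|u\|_2^2$. By definition $\|u\|_2^2 = \ip{\Phi(x)}{\Phi(x)}/(mp)$, and $\E\ip{\Phi(x)}{\Phi(x)} = m\,\E\phi(\gamma-b)^2 = mp$ exactly, so $u$ is correctly normalized in expectation. Applying Lemma~\ref{lem: deviation} with $x' = x$ (so the ``$p$'' of that lemma equals our $p$) gives, with probability at least $1 - 2mN^{-5}$,
\[
\big| \|u\|_2^2 - 1 \big| \le \frac{C_1(\sqrt{mp}\,\log N + \log^2 N)}{mp} = C_1\Big( \frac{\log N}{\sqrt{mp}} + \frac{\log^2 N}{mp} \Big).
\]
Plugging in $mp = C_2\log^2 N/\e^2$ makes the first term $\asymp \e/\sqrt{C_2}$ and the second $\asymp \e^2/C_2$; choosing $C_2$ large enough forces the whole bound below $\e$.

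For the cross term $\ip{u}{u'} = \ip{\Phi(x)}{\Phi(x')}/(mp)$ I would first bound its expectation. Here $\E\ip{\Phi(x)}{\Phi(x')} = m q$ where $q := \E\phi(\ip{g}{x}-b)\phi(\ip{g}{x'}-b)$, and Proposition~\ref{prop: correlation separated} gives $q \le 2\exp(-b^2\|x-x'\|_2^2/8)\,p$. The separation hypothesis \eqref{eq: separation} says $\|x-x'\|_2^2 \ge C_2^2 \log(1/\e)/\log m$, and since $b^2 \asymp \log m$ (one needs the precise asymptotic $\E\phi(\gamma-b)^2 \asymp b^{-3}e^{-b^2/2}$ type estimate, or just $b^2 \ge c\log m$, to pin this down), we get $b^2\|x-x'\|_2^2/8 \ge 2\log(1/\e)$ for $C_2$ large, hence $q \le 2\e^2 p$. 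Then I apply Lemma~\ref{lem: deviation} again, this time with the genuine pair $x,x'$; its ``$p$'' is our $q \le 2\e^2 p$, so with probability $\ge 1 - 2mN^{-5}$,
\[
\big| \ip{\Phi(x)}{\Phi(x')} - mq \big| \le C_1(\sqrt{mq}\,\log N + \log^2 N) \le C_1(\sqrt{2\e^2 mp}\,\log N + \log^2 N).
\]
Dividing by $mp$ and adding $|mq|/(mp) \le 2\e^2$ gives $|\ip{u}{u'}| \le 2\e^2 + C_1(\e\sqrt{2}\log N/\sqrt{mp} + \log^2 N/(mp))$, which again collapses to at most $\e$ once $mp = C_2\log^2 N/\e^2$ with $C_2$ large. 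A union bound over the two applications of Lemma~\ref{lem: deviation} costs $4mN^{-5}$, matching the claimed probability.

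The step I expect to be the main obstacle is the bookkeeping that links $b$, $p$, and $\log m$: I need $b^2$ to be comparable to $\log(1/p) \asymp \log m + \log(1/\e^2) - \log\log^2 N$, and then to feed this into the exponent $b^2\|x-x'\|_2^2/8$ and verify it dominates $2\log(1/\e)$ uniformly over the allowed ranges $\e \in [m^{-1/5}, 1/8]$ and $N \le \exp(m^{1/5})$ — in particular checking the worst case $\e = m^{-1/5}$, where $\log(1/\e) = \tfrac15\log m$ is itself of order $\log m$, so the constant $C_2$ in the separation bound must be chosen large enough to beat the constant loss in $b^2 \asymp \log m$. Everything else is a routine substitution of $mp = C_2\log^2 N/\e^2$ into the deviation bound and absorbing constants.
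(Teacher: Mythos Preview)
Your proposal is correct and follows essentially the same three-step argument as the paper: first pin down $b \asymp \sqrt{\log m}$ from the range of $p$, then apply the deviation lemma with $x'=x$ to control $\|u\|_2^2$, and finally combine Proposition~\ref{prop: correlation separated} with a second application of the deviation lemma to bound $\ip{u}{u'}$. The only cosmetic difference is that the paper pushes the correlation bound to $q \le \e^{10} p$ (and thus obtains the stronger conclusion $|\ip{u}{u'}| \le \e^2$), whereas you stop at $q \le 2\e^2 p$; both suffice, and your worry about the endpoint $\e = m^{-1/5}$ is unfounded since the $\log m$ factors in $b^2$ and in the separation hypothesis cancel cleanly regardless of $\e$.
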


\begin{proof}
{\em Step 1. Bounding the bias $b$.}
We begin with some easy observations. 
Note that $\|x-x'\|_2$ is bounded above by $2$
and below by $C_2/\sqrt{\log m}$.
Thus, by setting the value of $C_2$ sufficiently large, 
we can assume that $m$ is arbitrarily large, i.e. 
larger than any given absolute constant. 
Furthermore, the restrictions on $\e$ and $N$ yield
\begin{equation}	\label{eq: p bounds}
m^{-1} \le p \lesssim m^{-1/10},
\end{equation}
so $p$ is arbitrarily small, smaller than any given absolute constant. 
The function $t \mapsto \E \phi(\gamma-t)^2$ is continuous,
takes an absolute constant value at $t=0$, and tends to zero as $t \to \infty$. 
Thus the equation $\E \phi(\gamma-t)^2 = p$ has a solution, 
so $b$ is well defined and $b \ge 1$.

To get a better bound on $b$, one can use \eqref{eq: Mills ratio tail} for the threshold
nonlinearity and Lemma~\ref{lem: ReLU normal} for ReLU, which give
$$
\log \E \phi(\gamma-b)^2 \asymp -b^2.
$$
Since $p = \E \phi(\gamma-b)^2$, this and \eqref{eq: p bounds} yield
\begin{equation}	\label{eq: b bounds}
b \asymp \sqrt{\log m}.
\end{equation}

\noindent {\em Step 2. Controlling the norm.}
Applying Lemma~\ref{lem: deviation} for $x=x'$, we obtain 
with probability at least $1-2mN^{-5}$ that
$$
\big| \|\Phi(x)\|_2^2 - mp \big| 
\le C_1 \big( \sqrt{mp} \, \log N + \log^2 N \big).
$$
Divide both sides by $mp$ to get
$$
\abs[1]{\|u\|_2^2-1} 
\le C_1 \bigg( \frac{\log N}{\sqrt{mp}} + \frac{\log^2 N}{mp} \bigg)
\le \e,
$$
where the second inequality follows from our choice of $p$ with large $C_2$.
We proved the first conclusion of the proposition.

\medskip

\noindent {\em Step 3. Controlling the inner product.}
Proposition~\ref{prop: correlation separated} gives
\begin{equation}	\label{eq: q bound}
q := \E \phi \big( \ip{g}{x} - b \big) \, \phi \big( \ip{g}{x'} - b \big) 
\le 2 \exp \bigg( -\frac{b^2 \norm[0]{x-x'}_2^2}{8} \bigg) p
\le \e^{10} p,
\end{equation}
where in the last step we used the bounds \eqref{eq: b bounds} on $b$ 
and the separation assumption \eqref{eq: separation} with 
a sufficiently large constant $C_2$.
Now, applying Lemma~\ref{lem: deviation}, we obtain 
with probability at least $1-2mN^{-5}$ that
$$
\abs{\ip{\Phi(x)}{\Phi(x')}}
\le mq + C_1 \big( \sqrt{mq} \, \log N + \log^2 N \big).
$$
Divide both sides by $mp$ to obtain
$$
\abs{\ip{u}{u'}} \le \frac{q}{p} + C_1 \bigg( \frac{\sqrt{q} \log N}{\sqrt{m} p} + \frac{\log^2 N}{mp} \bigg)
\le \e^2,
$$
where the last step follows from the bound \eqref{eq: q bound} on $q$ and our choice of $p$
with a sufficiently large $C_2$.
This is an even stronger bound than we claimed. 
\end{proof}

\begin{theorem}[Enrichment I: from separated to $\e$-orthogonal]	\label{thm: separated to eps-orthogonal}
  Consider unit vectors $x_1,\ldots,x_K \in \R^n$ that satisfy  
  $$
  \|x_i-x_j\|_2 \ge C_2 \sqrt{\frac{\log(1/\e)}{\log m}}
  $$
  for all distinct $i,j$, where $\e \in [m^{-1/5},1/8]$. 
  Assume that $K \le \exp(c_2 m^{1/5})$.
  Then there exists an almost\footnote{Recall from Section~\ref{s: rest of paper} that an almost pseudolinear map $E$ is, by definition, a pseudolinear map multiplied by a nonnegative constant.
  In our case, $E = (mp)^{-1/2} \Phi$.} 
  pseudolinear map $E: \R^n \to \R^m$ such that 
  the vectors $u_k := E(x_k)$ satisfy
  $$
  \abs[1]{\|u_i\|_2^2-1} \le \e, \quad   
  \abs[1]{\ip{u_i}{u_j}} \le \e
  $$
  for all distinct indices $i,j = 1,\ldots,K$.
\end{theorem}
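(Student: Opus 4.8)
The plan is to obtain $E$ as a single realization of the random map $(mp)^{-1/2}\Phi$ from Lemma~\ref{lem: separated to eps-orthogonal}, applied simultaneously to all pairs of data points via a union bound. First I would fix the free parameter $N$ of that lemma large enough to beat the number of constraints: there are $\binom{K}{2} \le K^2$ pairs $\{x_i, x_j\}$, and for each of them Lemma~\ref{lem: separated to eps-orthogonal} fails with probability at most $4mN^{-5}$. Hence I want $N^5 \gtrsim mK^2$; concretely, taking $N$ to be a fixed constant multiple of $(mK^2)^{1/5}$, rounded up (which is automatically $\ge 2$ since we may assume $m$ large exactly as in the proof of the lemma, by taking $C_2$ large), makes the total failure probability $4mK^2 N^{-5}$ strictly less than $1$.

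The second point to verify is that this choice of $N$ is admissible, i.e. lies in the range $[2, \exp(m^{1/5})]$ required by Lemma~\ref{lem: separated to eps-orthogonal}. Using the hypothesis $K \le \exp(c_2 m^{1/5})$ we get $mK^2 \le m \exp(2 c_2 m^{1/5})$, so $N \lesssim m^{1/5} \exp(2 c_2 m^{1/5}/5) \le \exp(m^{1/5})$ provided $c_2$ is a sufficiently small absolute constant. With $N$ fixed, set $p := C_2 \log^2 N / (\e^2 m)$ and let $b$ be the corresponding bias, so that $p = \E \phi(\gamma - b)^2$; note that $p$ and $b$ are the same for every pair, since the random matrix $G$ defining $\Phi$ is shared across all pairs, and the separation hypothesis of the theorem is exactly the per-pair hypothesis \eqref{eq: separation} needed to invoke the lemma.

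Then I would run the union bound. By Lemma~\ref{lem: separated to eps-orthogonal}, for each pair $i \ne j$ the event that $\Phi(x_i)/\sqrt{mp}$ and $\Phi(x_j)/\sqrt{mp}$ satisfy both $\abs[1]{\|\Phi(x_i)/\sqrt{mp}\|_2^2 - 1} \le \e$ and $\abs[1]{\ip{\Phi(x_i)/\sqrt{mp}}{\Phi(x_j)/\sqrt{mp}}} \le \e$ holds with probability at least $1 - 4mN^{-5}$; intersecting over all $\binom{K}{2}$ pairs, all these conditions hold simultaneously with positive probability (in particular all $K$ norm conditions are covered, being contained in the pair events). Fix a realization in this event, set $E := (mp)^{-1/2}\Phi$ and $u_k := E(x_k)$. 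Then $E$ is almost pseudolinear by construction, and the $u_k$ enjoy the claimed bounds.

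I do not expect any genuine difficulty here: the substance is entirely in Lemma~\ref{lem: separated to eps-orthogonal}, and the only step requiring care is the arithmetic showing that $N \asymp (mK^2)^{1/5}$ simultaneously dominates the number of constraints and stays below $\exp(m^{1/5})$ — which is precisely where the hypothesis $K \le \exp(c_2 m^{1/5})$ is used.
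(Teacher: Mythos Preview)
Your proposal is correct and essentially identical to the paper's proof: apply Lemma~\ref{lem: separated to eps-orthogonal} with the same random map for all pairs, then union bound over the $\binom{K}{2}$ pairs to get positive success probability and extract a good realization. The only cosmetic difference is that the paper takes $N = 2mK$ rather than $N \asymp (mK^2)^{1/5}$; either choice makes $4mK^2 N^{-5} < 1$ and stays below $\exp(m^{1/5})$ under the hypothesis $K \le \exp(c_2 m^{1/5})$, and you are actually more explicit than the paper about checking the latter bound.
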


\begin{proof}
Apply Lemma~\ref{lem: separated to eps-orthogonal} followed by a union 
bound over all pairs of distinct vectors $x_k$. If we chose $N = 2mK$, then the probability 
of success is at least $1 - K^2 \cdot 4m (2mK)^{-5} > 0$. The proof is complete.
\end{proof}

\subsection{From $\e$-orthogonal to $\frac{1}{\sqrt{d}}$-orthogonal}

In this part we show that a random pseudolinear map $\Phi: \R^m \to \R^d$ 
makes almost orthogonal data points even closer to being orthogonal:
$\Phi$ reduces the inner products from a small constant $\e$ to $O(1/\sqrt{d})$.

The pseudolinear map $\Phi$ considered in this part will have the same form 
as in \eqref{eq: Phi nm}, but for different dimensions:
\begin{equation}	\label{eq: Phi md}
\Phi: \R^m \to \R^d, \quad \Phi(u) := \Big( \phi \big( \ip{g_i}{u} - b \big) \Big)_{i=1}^m,
\end{equation}
where $\phi$ is either the threshold of ReLU nonlinearity as in \eqref{eq: nonlinearity},
$g_i \sim N(0,I_m)$ are independent standard normal random vectors,
and $b$ is a fixed value. 

\begin{lemma}[Enrichment II: from $\e$-orthogonal to $\frac{1}{\sqrt{d}}$-orthogonal]	\label{lem: eps-orthogonal to max orthogonal}
  Consider a pair of vectors $u, u' \in \R^m$ satisfying
  \begin{equation}	\label{eq: almost orthonormal}
  \abs[1]{\|u\|_2^2-1} \le \e, \quad  
  \abs[1]{\|u'\|_2^2-1} \le \e, \quad  
  \abs[1]{\ip{u}{u'}} \le \e 
  \end{equation}
  for some $0 < \e \le c_3/\log d$. 
  Let $2 \le N \le \exp(c_3d^{1/5})$, and let $p$ and $b$ be numbers such that   
  $$
  p := \frac{1}{\sqrt{d}}, \qquad \E \phi(\gamma-b)^2 = p.
  $$
  Then with probability at least $1-4dN^{-5}$, the vectors 
  $$
  v := \frac{\Phi(u)}{\sqrt{dp}}, \quad v' := \frac{\Phi(u')}{\sqrt{dp}}
  $$
  satisfy 
  $$
  \|v\|_2 \ge \frac{1}{2}, \quad   
  \abs[1]{\ip{v}{v'}} \le \frac{C_3(\log d + \log^2 N)}{\sqrt{d}}. 
  $$
\end{lemma}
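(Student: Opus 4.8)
The plan is to follow the same two-step template as the proof of Lemma~\ref{lem: separated to eps-orthogonal}: use Proposition~\ref{prop: correlation epsilon-orthogonal} to evaluate the relevant expectations, then invoke the deviation inequality Lemma~\ref{lem: deviation} to upgrade these to high-probability statements, and finally take a union bound over the two resulting events. A few preliminaries first: since $\|u\|_2^2 \le 1+\e < 2$ (and likewise for $u'$), we have $\|u\|_2,\|u'\|_2 \le 2$, so Lemma~\ref{lem: deviation} applies; and since $N \ge 2$ must satisfy $N \le \exp(c_3 d^{1/5})$, taking the absolute constant $c_3$ small forces $d$ to be large. Moreover, because $p = \E\phi(\gamma-b)^2 = 1/\sqrt d$, the same reasoning as in Step~1 of the proof of Lemma~\ref{lem: separated to eps-orthogonal} (using \eqref{eq: Mills ratio tail} for the threshold and Lemma~\ref{lem: ReLU normal} for ReLU) gives $b \asymp \sqrt{\log d}$; combined with $\e \le c_3/\log d$ this yields $b^2\e \lesssim c_3$, so $\exp(\pm b^2\e)$ can be made as close to $1$ as desired by shrinking $c_3$.

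\emph{Controlling the norm.} Put $p_0 := \E\phi(\ip{g}{u}-b)^2$. The first bound of Proposition~\ref{prop: correlation epsilon-orthogonal} gives $p_0 \ge \tfrac12\exp(-b^2\e)\,p \ge 0.45\,p$ once $c_3$ is small, while the monotonicity (Lemma~\ref{lem: monotonicity}) and stability (Lemma~\ref{lem: stability}) estimates used there also furnish the matching upper bound $p_0 \lesssim p$. Applying Lemma~\ref{lem: deviation} with $x=x'=u$ (output dimension $d$) and dividing by $dp = \sqrt d$, we get with probability at least $1-2dN^{-5}$
\[
\Bigl| \, \|v\|_2^2 - \tfrac{p_0}{p} \, \Bigr| \;\le\; C_1\Bigl( \sqrt{p_0}\,\log N + \tfrac{\log^2 N}{\sqrt d} \Bigr).
\]
Since $p_0 \asymp p = d^{-1/2}$ and $\log N \le c_3 d^{1/5}$, the right-hand side is $\lesssim c_3 d^{-1/20} + c_3^2 d^{-1/10}$, which is below $1/10$ for $d$ large; hence $\|v\|_2^2 \ge 0.45 - 1/10 > 1/4$, i.e.\ $\|v\|_2 \ge 1/2$.

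\emph{Controlling the inner product.} Put $q := \E\phi(\ip{g}{u}-b)\,\phi(\ip{g}{u'}-b)$. The second bound of Proposition~\ref{prop: correlation epsilon-orthogonal} gives $q \le 2\exp(2b^2\e)\,[\E\phi(\gamma-b)]^2 \lesssim [\E\phi(\gamma-b)]^2$. For the threshold nonlinearity $[\E\phi(\gamma-b)]^2 = (\Pr{\gamma>b})^2 = p^2$; for ReLU, Cauchy--Schwarz gives $\E\phi(\gamma-b) \le \bigl(\E\phi(\gamma-b)^2\bigr)^{1/2}\,(\Pr{\gamma>b})^{1/2}$ and the Mills-ratio asymptotics (or Lemma~\ref{lem: ReLU normal}) give $\Pr{\gamma>b} \lesssim b^2\,\E\phi(\gamma-b)^2$, whence $[\E\phi(\gamma-b)]^2 \lesssim b^2 p^2 \asymp p^2\log d$. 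In both cases $q \lesssim p^2\log d$. Now apply Lemma~\ref{lem: deviation} with $x=u$, $x'=u'$ (output dimension $d$) and divide by $dp=\sqrt d$: with probability at least $1-2dN^{-5}$,
\[
|\ip{v}{v'}| \;\le\; \frac{q}{p} + C_1\Bigl( \sqrt{q}\,\log N + \frac{\log^2 N}{\sqrt d} \Bigr) \;\lesssim\; \frac{\log d}{\sqrt d} + \frac{\sqrt{\log d}\,\log N}{\sqrt d} + \frac{\log^2 N}{\sqrt d} \;\lesssim\; \frac{\log d + \log^2 N}{\sqrt d},
\]
where the middle term was absorbed using $\sqrt{\log d}\,\log N \le \tfrac12(\log d + \log^2 N)$. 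A union bound over the two events from Lemma~\ref{lem: deviation} gives the claim with total failure probability at most $4dN^{-5}$.

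\emph{Expected main obstacle.} The only step that is not routine bookkeeping is the ReLU estimate $[\E\phi(\gamma-b)]^2 \lesssim b^2 p^2$: in contrast with the threshold case one does \emph{not} have $[\E\phi(\gamma-b)]^2 = p^2$, and the extra factor $b^2 \asymp \log d$ is exactly the source of the $\log d$ appearing in the numerator of the conclusion. One also has to check that this $\log d$ loss, together with the $\log^2 N$ loss from the additive $\log^2 N$ term in Lemma~\ref{lem: deviation}, is precisely what the target bound $C_3(\log d + \log^2 N)/\sqrt d$ tolerates; the rest mirrors the proof of Lemma~\ref{lem: separated to eps-orthogonal}, with the smallness of $c_3$ used both to keep $b^2\e$ small and to force $d$ large.
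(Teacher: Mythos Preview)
Your proposal is correct and follows essentially the same approach as the paper: bound $b \asymp \sqrt{\log d}$, use Proposition~\ref{prop: correlation epsilon-orthogonal} to control $p_0$ and $q$, then apply Lemma~\ref{lem: deviation} and divide by $dp$. The only cosmetic difference is that for the ReLU case you derive $[\E\phi(\gamma-b)]^2 \lesssim b^2 p^2$ via Cauchy--Schwarz and the Mills ratio, whereas the paper uses the direct asymptotic inequality $\E\phi(\gamma-b) \le b\,\E\phi(\gamma-b)^2$ from Lemma~\ref{lem: ReLU normal}; both routes give the same bound.
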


\begin{proof}
{\em Step 1. Bounding the bias $b$.}
Following the beginning of the proof of Lemma~\ref{lem: separated to eps-orthogonal}, 
we can check that $b$ exists and
\begin{equation}	\label{eq: b bounds II}
b \asymp \sqrt{\log d}.
\end{equation}

\noindent {\em Step 2. Controlling the norm.}
Applying Proposition~\ref{prop: correlation epsilon-orthogonal}, we see that
$$
p_0 := \E \phi \big( \ip{g}{u} - b \big)^2 
\ge \frac{1}{2} \exp(-b^2 \e) p
\ge \frac{p}{3},
$$
where in the last step we used the bound \eqref{eq: b bounds II} on $b$
and the assumption on $\e$ with a sufficiently small constant $c_3$.
Then, applying Lemma~\ref{lem: deviation} for $x=x'=u$, we obtain 
with probability at least $1-2dN^{-5}$ that
$$
\|\Phi(u)\|_2^2 
\ge d p_0 - C_1 \big( \sqrt{dp_0} \, \log N + \log^2 N \big)
\ge \frac{3}{4} dp_0
\ge \frac{1}{4} dp,
$$
where we used our choice of $p$ and the restriction on $N$
with sufficiently small constant $c_3$.
Divide both sides by $dp$ to get 
$$
\|v\|_2 \ge \frac{1}{2},
$$ 
which is the first conclusion of the proposition.
 
\medskip

\noindent {\em Step 3. Controlling the inner product.}
Proposition~\ref{prop: correlation epsilon-orthogonal} gives
\begin{equation}	\label{eq: q begin}
q := \E \phi \big( \ip{g}{u} - b \big) \, \phi \big( \ip{g}{u'} - b \big) 
\le 2 \exp(2b^2 \e) \, \big[ \E \phi(\gamma - b) \big]^2
\lesssim 	\big[ \E \phi(\gamma - b) \big]^2,
\end{equation}
where the last inequality follows as before from
bound \eqref{eq: b bounds II} on $b$
and the assumption on $\e$ with sufficiently small $c_3$.

Next, we will use the following inequality that holds for all sufficiently large $a>0$:
$$
\E \phi(\gamma - a) \le a \cdot \E \phi(\gamma - a)^2.
$$
For the threshold nonlinearity $\phi$, this bound is trivial even without the factor $a$ in the right side.
For the ReLU nonlinearity, it follows from Lemma~\ref{lem: ReLU normal} in the Appendix.
Therefore, we have 
$$
\E \phi(\gamma - b) \le b p \lesssim p \sqrt{\log d}
$$
where we used \eqref{eq: b bounds II} in the last step. 
Substituting this into \eqref{eq: q begin}, we conclude that
\begin{equation}	\label{eq: q bound II}
q \lesssim p^2 \log d.
\end{equation}

Now, applying Lemma~\ref{lem: deviation}, we obtain 
with probability at least $1-2mN^{-5}$ that
$$
\abs{\ip{\Phi(u)}{\Phi(u')}}
\lesssim dq + \sqrt{dq} \, \log N + \log^2 N.
$$
Divide both sides by $dp$ to obtain
$$
\abs{\ip{v}{v'}} \lesssim \frac{q}{p} + \frac{\sqrt{q} \log N}{\sqrt{d} p} + \frac{\log^2 N}{dp}
\lesssim \frac{\log d + \log^2 N}{\sqrt{d}}. 
$$
where in the last step we used \eqref{eq: q bound II} and our choice of $p$.
\end{proof}

\begin{theorem}[Enrichment II: from $\e$-orthogonal to $\frac{1}{\sqrt{d}}$-orthogonal]	\label{thm: eps-orthogonal to max orthogonal}
  Consider vectors $u_1,\ldots,u_K \in \R^n$ that satisfy 
  $$
  \abs[1]{\|u_i\|_2^2-1} \le \e, \quad  
  \abs[1]{\ip{u_i}{u_j}} \le \e
  $$
  for all distinct $i,j$, where $0 < \e \le c_3/\log d$. 
  Assume that $K \le \exp(c_3d^{1/5})$.  
  Then there exists an almost pseudolinear map $R: \R^m \to \R^d$ such that 
  the vectors $v_k := R(u_k)$ satisfy
  $$
  \|v_i\|_2 \ge \frac{1}{2}, \quad   
  \abs[1]{\ip{v_i}{v_j}} \le \frac{C_4 \log^2 (dK)}{\sqrt{d}}
  $$
  for all distinct indices $i,j = 1,\ldots,K$.
\end{theorem}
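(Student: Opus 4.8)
The plan is to derive this global statement from the local Lemma~\ref{lem: eps-orthogonal to max orthogonal} by a union bound over pairs, exactly as Theorem~\ref{thm: separated to eps-orthogonal} was obtained from Lemma~\ref{lem: separated to eps-orthogonal}. All the real work is already done in the lemma; what remains is bookkeeping.

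First I would fix the deviation parameter $N := 2dK$ and check that it is admissible in Lemma~\ref{lem: eps-orthogonal to max orthogonal}, i.e.\ that $N \le \exp(c_3 d^{1/5})$. Since $2d \le \exp(d^{1/5})$ once $d$ exceeds an absolute constant, and $K \le \exp(c_3 d^{1/5})$ by hypothesis, we get $N = 2dK \le \exp\big((1+c_3)d^{1/5}\big)$; choosing the constant $c_3$ in the present theorem small enough relative to the one in the lemma makes $N$ admissible. (If $d$ lies below that absolute constant there is nothing to prove: for $C_4$ large the claimed bound $C_4\log^2(dK)/\sqrt d$ already exceeds any pairing of unit-ish vectors.) The hypothesis $\e \le c_3/\log d$ is precisely the one the lemma requires, so no further adjustment is needed there.

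Next, with $p := 1/\sqrt d$ and $b$ the solution of $\E\phi(\gamma-b)^2 = p$ (unique and $\asymp\sqrt{\log d}$ by Step~1 of the lemma), I would take the random pseudolinear map $\Phi:\R^m\to\R^d$ of \eqref{eq: Phi md} and set $R := (dp)^{-1/2}\Phi$, which is almost pseudolinear by definition. For fixed distinct $i\ne j$, Lemma~\ref{lem: eps-orthogonal to max orthogonal} applied to $u=u_i$, $u'=u_j$ gives, with probability at least $1-4dN^{-5}$, the bounds $\|v_i\|_2\ge 1/2$ and $|\ip{v_i}{v_j}|\le C_3(\log d+\log^2 N)/\sqrt d$ for $v_i=R(u_i)$, $v_j=R(u_j)$. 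Taking the union bound over the fewer than $K^2$ pairs, the probability that the conclusion fails for some pair is at most
$$
K^2\cdot 4dN^{-5}=\frac{4dK^2}{(2dK)^5}<1,
$$
so some realization of $\Phi$ succeeds for all pairs simultaneously; fixing such a realization yields the desired deterministic $R$. Finally, since $N=2dK$ we have $\log d+\log^2 N\lesssim\log^2(dK)$, which absorbs $C_3$ into $C_4$ and gives the stated inner-product bound.

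I do not expect a genuine obstacle: the substantive estimates (correlation decay via Propositions~\ref{prop: correlation separated}--\ref{prop: correlation epsilon-orthogonal} and concentration via Lemma~\ref{lem: deviation}) are already packaged inside Lemma~\ref{lem: eps-orthogonal to max orthogonal}, and what is left is the familiar ``union bound with polynomial room to spare'' argument. The only point needing care is the constant juggling that keeps $N=2dK$ inside the admissible window $N\le\exp(c_3 d^{1/5})$ while honoring the theorem's own hypothesis $K\le\exp(c_3 d^{1/5})$; this is handled by taking the theorem's $c_3$ slightly smaller than the lemma's and by the small-$d$ remark above.
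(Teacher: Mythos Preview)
Your proposal is correct and follows essentially the same approach as the paper: apply Lemma~\ref{lem: eps-orthogonal to max orthogonal} with $N=2dK$ and take a union bound over the at most $K^2$ pairs, noting that $1 - K^2\cdot 4d(2dK)^{-5}>0$. The paper's proof is just the three-line version of what you wrote; your additional remarks about admissibility of $N$ and the absorption of $\log d+\log^2 N$ into $\log^2(dK)$ are exactly the bookkeeping the paper leaves implicit.
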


\begin{proof}
Apply Lemma~\ref{lem: eps-orthogonal to max orthogonal} followed by a union 
bound over all pairs of distinct vectors $u_k$. If we chose $N = 2dK$, then the probability 
of success is at least $1 - K^2 \cdot 4d (2dK)^{-5}>0$. The proof is complete.
\end{proof}

\section{Perception}				\label{s: perception}

The previous sections were concerned with preprocessing, or ``enrichment'', of the data. 
We demonstrated how a pseudolinear map can transform the original 
data points $x_k$, which can be just a little separated, into $\e$-orthogonal points $u_k$
with $\e = o(1)$, and further into $\eta$-orthogonal points $v_k$ with $\eta=O(1/\sqrt{d})$.
In this section we train a pseudolinear map that can memorize
any label assignment $y_k$ for the $\eta$-orthogonal points $v_k$. 

We will first try to train a single neuron to perform this task assuming 
that the number $K$ of the data points $v_k$ is smaller than the dimension $d$, 
up to a logarithmic factor. Specifically, we construct a vector $w \in \R^n$
so that the values $\abs{\ip{w}{v_k}}$ are small whenever $y_k=0$ and 
large whenever $y_k=1$. Our construction is probabilistic: we choose
$w = \sum_{k=1}^K \pm y_k v_k$ with random independent signs, 
and show that $w$ succeeds with high probability. 

\begin{lemma}[Perception]			\label{lem: perception 1}
  Let $\eta \in (0,1)$ and 
  consider vectors $v_1,\ldots,v_K \in \R^d$ satisfying
  \begin{equation}		\label{eq: perception assumptions}
  \|v_i\|_2 \ge \frac{1}{2}, \quad \abs{\ip{v_i}{v_j}} \le \eta
  \end{equation}
  for all distinct $i,j$. 
  Consider any labels $y_1,\ldots,y_K \in \{0,1\}$, at most $K_1$ of which equal $1$. 
  Assume that
  $K_1 \log K \le c_4 \eta^{-2}$.
  Then there exists a vector $w \in \R^d$ that satisfies 
  the following holds for every $k=1,\ldots,K$:
  \begin{equation}	\label{eq: perception}
  \abs{\ip{w}{v_k}} \le \frac{1}{16} \text{ if $y_k=0$}; \qquad
  \abs{\ip{w}{v_k}} \ge \frac{3}{16} \text{ if $y_k=1$}.
  \end{equation}
\end{lemma}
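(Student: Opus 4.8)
The plan is to use the probabilistic ``mirror perceptron'' construction sketched in Section~\ref{s: perception intro}. I would set $w := \sum_{k=1}^{K} \xi_k y_k v_k$, where $\xi_1,\dots,\xi_K$ are independent Rademacher signs, each equal to $\pm 1$ with probability $1/2$. Only the indices with $y_k=1$ contribute to $w$, so $w$ is a signed sum of at most $K_1$ of the vectors $v_k$. For a fixed index $k$, isolating the diagonal term gives the decomposition
$$
\ip{w}{v_k} = \xi_k y_k \|v_k\|_2^2 + S_k, \qquad S_k := \sum_{j \ne k} \xi_j y_j \ip{v_j}{v_k},
$$
and the point is that the ``noise'' $S_k$ should be uniformly small with high probability, while the diagonal term is either absent (when $y_k=0$) or bounded below by $\|v_k\|_2^2 \ge 1/4$ (when $y_k=1$).

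The first step would be to control $S_k$ for a fixed $k$. It is a sum of independent mean-zero random variables $\xi_j y_j \ip{v_j}{v_k}$, each bounded in absolute value by $\eta$ thanks to \eqref{eq: perception assumptions}, and at most $K_1$ of them are nonzero, so the sum of the squared coefficients is at most $K_1 \eta^2$. Hoeffding's inequality then gives
$$
\Pr{|S_k| > 1/16} \le 2\exp\!\left(-\frac{c}{K_1 \eta^2}\right)
$$
for an absolute constant $c>0$. Choosing the constant $c_4$ in the hypothesis $K_1 \log K \le c_4 \eta^{-2}$ small enough makes the right-hand side at most $1/(2K)$, and a union bound over the $K$ indices $k$ shows that with probability at least $1/2$ all the noise terms satisfy $|S_k| \le 1/16$ simultaneously. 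I would then fix one realization of the signs for which this happens.

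Finally, I would read off \eqref{eq: perception} for this deterministic $w$. If $y_k=0$, then $\ip{w}{v_k} = S_k$, so $|\ip{w}{v_k}| \le 1/16$. If $y_k=1$, then $|\ip{w}{v_k}| \ge \|v_k\|_2^2 - |S_k| \ge 1/4 - 1/16 = 3/16$, where we used $\|v_k\|_2 \ge 1/2$. This is exactly \eqref{eq: perception}, completing the argument.

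The only delicate part will be the bookkeeping in the concentration estimate: the exponent $c/(K_1\eta^2)$ coming from Hoeffding's inequality has to beat the $\log K$ loss incurred by the union bound over the $K$ data points, and balancing the two is precisely what forces the hypothesis $K_1 \log K \lesssim \eta^{-2}$ and determines the admissible value of $c_4$. Everything else is routine; in particular, note that no upper bound on $\|v_k\|_2$ is needed --- a large diagonal term only helps in the case $y_k=1$ and never enters the case $y_k=0$ --- so the construction tolerates the imperfect normalization of the $v_k$ produced by the earlier enrichment steps.
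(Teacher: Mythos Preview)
Your proposal is correct and follows essentially the same approach as the paper: the same random construction $w = \sum_k \xi_k y_k v_k$, the same signal-plus-noise decomposition, Hoeffding's inequality to control the noise with a $\log K$ loss, and a union bound over the $K$ indices. The only cosmetic difference is that the paper states Hoeffding's inequality in the form $|\sum \xi_k a_k| \le s (\sum a_k^2)^{1/2}$ with $s = 4\sqrt{\log K}$ rather than directly bounding $\Pr{|S_k|>1/16}$, but the content is identical.
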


\begin{proof}
Let $\xi_1,\ldots,\xi_K$ be independent Rademacher random variables and define
$$
w := \sum_{k=1}^K \xi_k y_k v_k.
$$
We are going to show that the random vector $w$ satisfies the conclusion of the proposition 
with positive probability.

Let us first check the conclusion \eqref{eq: perception} for $k=1$. 
To this end, we decompose $\ip{w}{v_1}$ 
as follows:
$$
\ip{w}{v_1} = \xi_1 y_1 \|v_1\|_2^2 + \sum_{k=2}^K \xi_k y_k \ip{v_k}{v_1}
=: \textrm{signal} + \textrm{noise}.
$$
To bound the noise, we shall use Hoeffding's inequality (see e.g. \cite[Theorem~2.2.2]{Vbook}), 
which can be stated as follows.
If $a_1,\ldots,a_N$ are any fixed numbers and $s \ge 0$, then with probability at least $1-2e^{-s^2/2}$ we have
$$
\abs[3]{\sum_{k=1}^N \xi_k a_k}
\le s \bigg( \sum_{k=1}^N a_k^2 \bigg)^{1/2}.
$$
Using this for $s = 4\sqrt{\log K}$, we conclude that with probability at least $1-2K^{-8}$, we have
$$
\abs{\textrm{noise}} 
\le 4\sqrt{\log K} \bigg( \sum_{k=2}^K y_k^2 \ip{v_k}{v_1}^2 \bigg)^{1/2}
\le 4\sqrt{\log K} \, \sqrt{K_1} \eta \le \frac{1}{16},
$$
where we used \eqref{eq: perception assumptions} and 
the assumption on $K, K_1$ with a sufficiently small constant $c_4$.

If $y_1=0$, the signal is zero and so $\abs{\ip{w}{v_1}} = \abs{\textrm{noise}} \le 1/16$, as claimed. If $y_1=1$ then $\abs{\textrm{signal}} = \|v_1\|_2^2 \ge 1/4$ and thus
$$
\abs{\ip{w}{v_1}} 
\ge \abs{\textrm{signal}} - \abs{\textrm{noise}}
\ge \frac{1}{4} - \frac{1}{16} 
= \frac{3}{16},
$$
as claimed.

Repeating this argument for general $k$, we can obtain the same bounds for 
$\abs{\ip{w}{v_k}}$. 
Finally, take the union bound over the $K$ choices of $k$. The random vector
satisfies the conclusion with probability at least $1-2K^{-7}>0$. 
The proof is complete.
\end{proof}

Lemma~\ref{lem: perception 1} essentially says that one neuron can memorize  
the labels of $O(d)$ data points in $\R^d$. Thus, $r$ neurons should be able to 
memorize the labels of $O(dr)$ data points in $\R^d$. To make this happen, 
we can partition the data into $r$ batches of size $O(d)$ each, and
train each neuron on a different batch. The following lemma makes this formal; 
to see the connection, apply it for $\eta \asymp 1/\sqrt{d}$.

\begin{theorem}[One layer]				\label{thm: one layer}
  Consider a number $\eta \in (0,1)$, 
  vectors $v_1,\ldots,v_K \in \R^d$ 
  and labels $y_1,\ldots,y_K \in \{0,1\}$
  as in Lemma~\ref{lem: perception 1}. 
  Assume that
  $(2K_1+r) \log K \le c_4 r \eta^{-2}$
  where $r$ is a positive integer.
  Then there exists a pseudolinear map $P: \R^d \to \R^r$
  such that for all $k=1,\ldots,K$ we have:
  $$
  P(v_k)=0 \text{ iff } y_k=0.
  $$
\end{theorem}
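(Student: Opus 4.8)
The plan is to split the $K$ data points into $r$ batches according to their labels, train one neuron per batch using Lemma~\ref{lem: perception 1}, and then concatenate the neurons into a single pseudolinear map $P : \R^d \to \R^r$. First I would discard the zero-labeled points from the batching consideration: let $S = \{k : y_k = 1\}$, so $|S| \le K_1$. I would partition $S$ into $r$ disjoint groups $S_1, \dots, S_r$ as evenly as possible, so that $|S_\ell| \le \lceil K_1 / r \rceil \le K_1/r + 1$ for each $\ell$. For each batch index $\ell$, define modified labels $y_k^{(\ell)} := y_k \ind_{\{k \in S_\ell\}}$; that is, $y_k^{(\ell)} = 1$ if $k \in S_\ell$ and $0$ otherwise. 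At most $|S_\ell| \le K_1/r + 1$ of these equal $1$.

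Next I would apply Lemma~\ref{lem: perception 1} to each batch. To do so I need to check the hypothesis of that lemma for the label vector $(y_k^{(\ell)})_k$, namely that the number of ones, call it $K_1^{(\ell)} \le K_1/r + 1$, satisfies $K_1^{(\ell)} \log K \le c_4 \eta^{-2}$. Using $|S_\ell| \le K_1/r + 1 \le (K_1 + r)/r$ and the assumed bound $(2K_1 + r)\log K \le c_4 r \eta^{-2}$, we get $K_1^{(\ell)} \log K \le \frac{K_1 + r}{r}\log K \le \frac{2K_1 + r}{r}\log K \le c_4 \eta^{-2}$, as needed (the factor $2K_1$ rather than $K_1$ in the hypothesis gives the slack). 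Lemma~\ref{lem: perception 1} then produces a vector $w_\ell \in \R^d$ such that $|\ip{w_\ell}{v_k}| \le 1/16$ when $y_k^{(\ell)} = 0$ and $|\ip{w_\ell}{v_k}| \ge 3/16$ when $y_k^{(\ell)} = 1$, i.e. when $k \in S_\ell$.

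Finally I would assemble the map. The ``mirror perceptron'' idea from Section~\ref{s: perception intro} shows how to turn the condition $|\ip{w_\ell}{v}| > 1/8$ into a threshold computation: set the $\ell$-th coordinate of $P(v)$ to be $\phi(\ip{w_\ell}{v} - 1/8) + \phi(-\ip{w_\ell}{v} - 1/8)$ where $\phi$ is the threshold nonlinearity (or, for the ReLU case, use the analogous pair of ReLU neurons, whose sum is zero exactly when $|\ip{w_\ell}{v}| \le 1/8$ and positive otherwise). In either case the $\ell$-th coordinate of $P(v_k)$ vanishes iff $|\ip{w_\ell}{v_k}| \le 1/8$, which by the previous paragraph happens iff $k \notin S_\ell$. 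Therefore $P(v_k) = 0$ iff $k \notin S_\ell$ for every $\ell$, i.e. iff $k \notin S$, i.e. iff $y_k = 0$. One should note that implementing each of the $r$ mirror perceptrons costs two neurons, so $P$ actually lands in $\R^{2r}$ before a final linear contraction; this doubling is harmless and is the reason the hypothesis carries $2K_1$ rather than $K_1$. The only mild obstacle is bookkeeping: verifying that the per-batch hypothesis of Lemma~\ref{lem: perception 1} follows from the stated aggregate hypothesis, and checking that the two-neuron mirror gadget is genuinely (almost) pseudolinear for both activation types — both of which are routine given the machinery already developed.
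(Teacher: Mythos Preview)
Your overall strategy---batch the positive labels, apply Lemma~\ref{lem: perception 1} per batch, implement each resulting ``mirror perceptron'' with a pair of neurons---is exactly the paper's. But the dimension bookkeeping is off in a way that matters for the stated conclusion.

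You partition $S$ into $r$ batches and then spend \emph{two} output neurons per batch (one for $\phi(\ip{w_\ell}{v}-1/8)$, one for $\phi(-\ip{w_\ell}{v}-1/8)$). That yields a genuine pseudolinear map, but into $\R^{2r}$, not $\R^r$. Your suggested remedy---a ``final linear contraction'' summing each pair---does not save the claim: a pseudolinear map has the form $\phi(Vx-b)$, and postcomposing with a linear map (or, equivalently, writing a coordinate as a sum of two $\phi$-terms) takes you outside that class for both threshold and ReLU activations. So the map you describe is not pseudolinear into $\R^r$.

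The paper's fix is to partition the positives into only $r/2$ batches, each of size at most $2K_1/r+1$, and assign coordinates $i$ and $r/2+i$ of $P$ to the positive and negative halves of the $i$th mirror perceptron. That lands honestly in $\R^r$. This also explains the $2K_1$ in the hypothesis: it is \emph{not} there to absorb a neuron-doubling, as you suggest, but because splitting $K_1$ positives among $r/2$ batches gives $\le 2K_1/r+1$ positives per batch, and the Lemma~\ref{lem: perception 1} hypothesis then reads $(2K_1/r+1)\log K \le c_4\eta^{-2}$, i.e.\ $(2K_1+r)\log K \le c_4 r\eta^{-2}$. With this one adjustment your argument goes through verbatim.
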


\begin{proof}
Without loss of generality, assume that the first $K_1$ of the labels $y_k$ equal $1$ 
and the rest equal zero, i.e. $y_k = \one_{\{k \le K_1\}}$.
Partition the indices of the nonzero labels $\{1,\ldots,K_1\}$ into $r/2$ subsets $I_i$ (``batches''),
each of cardinality at most $2K_1/r+1$. For each batch $i$, define a new set of labels 
$$
y_{ki} = \one_{\{k \in I_i\}}, \quad k=1,\ldots,K.
$$
In other words, the labels $y_{ki}$ are obtained from the original labels $y_i$ 
by zeroing out the labels outside batch $i$.

For each $i$, apply Lemma~\ref{lem: perception 1} for the labels $y_{ki}$.
The number of nonzero labels is $\abs{I_i} \le 2K_1/r+1$, so we can use this number instead of 
$K_1$, noting that the condition $(2K_1/r+1) \log K \le c_4 \eta^{-2}$ required in Lemma~\ref{lem: perception 1} does hold by our assumption. 
We obtain a vector $w_i \in \R^d$ that satisfies 
the following holds for every $k=1,\ldots,K$:
\begin{equation}	\label{eq: batch perception}
\abs{\ip{w_i}{v_k}} \le \frac{1}{16} \text{ if $k \not\in I_i$}; \qquad
\abs{\ip{w_i}{v_k}} \ge \frac{3}{16} \text{ if $k \in I_i$}.
\end{equation}

Define the pseudolinear map $\Phi(v) = \big( \Phi(v)_1,\ldots, \Phi(v)_r \big)$ as follows:
$$
P(v)_i := \phi \Big( \ip{w_i}{v} - \frac{1}{8} \Big), \quad
P(v)_{r/2+i} := \phi \Big( -\ip{w_i}{v} - \frac{1}{8} \Big), \quad
i=1,\ldots,r/2.
$$

If $y_k=0$, then $k>K_1$. Thus $k$ does not belong to any batch $I_i$, 
and \eqref{eq: batch perception} implies that $\abs{\ip{w_i}{v_k}} \le 1/16$ for all $i$.
Then both $\ip{w_i}{v_k} - 1/8$ and $-\ip{w_i}{v_k} - 1/8$ are negative, and 
since $\phi(t)=0$ for negative $t$, all coordinates of $P(v_k)$ are zero, 
i.e. $P(v_k)=0$.

Conversely, if $P(v_k)=0$ then, by construction, for each $i$ both $\ip{w_i}{v} - 1/8$ 
and $\ip{w_i}{v} - 1/8$ must be nonpositive, which yields
$\abs{\ip{w_i}{v_k}} \le 1/8 < 3/16$. Thus, by \eqref{eq: batch perception}, $k$ may 
not belong to any batch $I_i$, which means that $k>K_1$, and this implies $y_k=0$.
\end{proof}

\section{Assembly}			\label{s: assembly}

In this section we prove a general version of our main result. 
Let us first show how to train a network with four layers.
To this end, choose an enrichment map from layer $1$ to layer $2$ to transform 
the data from merely separated to $\e$-orthogonal, 
choose a map from layer $2$ to layer $3$ 
to further enrich the data by making it $O(1/\sqrt{d})$-orthogonal, 
and finally make a map from layer $3$ to layer $4$ memorize the labels.
This yields the following result:

\begin{theorem}[Shallow networks]				\label{thm: shallow}
  Consider unit vectors $x_1,\ldots,x_K \in \R^n$ that satisfy
  $$
  \|x_i-x_j\|_2 \ge C_2 \sqrt{\frac{\log\log d}{\log m}}.
  $$
  Consider any labels $y_1,\ldots,y_K \in \{0,1\}$, at most $K_1$ of which equal $1$. 
  Assume that
  $$
  K_1 \log^5(dK) \le c_5 dr
  $$
  as well as $K \le \exp(c_5 m^{1/5})$, $K \le \exp(c_5 d^{1/5})$, and $d \le \exp(c_5 m^{1/5})$.  
  Then there exists a map $F \in \FF(n,m,d,r)$     
  such that for all $k=1,\ldots,K$ we have:
  $$
  F(x_k)=0 \text{ iff } y_k=0.
  $$
\end{theorem}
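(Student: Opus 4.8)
The plan is to obtain $F$ as a composition $F = P \circ R \circ E$ of the three maps produced by the results already established: the two enrichment theorems supply the preprocessing maps $E : \R^n \to \R^m$ (Theorem~\ref{thm: separated to eps-orthogonal}) and $R : \R^m \to \R^d$ (Theorem~\ref{thm: eps-orthogonal to max orthogonal}), and the one-layer perception theorem (Theorem~\ref{thm: one layer}) supplies $P : \R^d \to \R^r$. Concretely, I would fix $\e := c_3/\log d$ and apply Theorem~\ref{thm: separated to eps-orthogonal} to the data $x_k$. Its separation hypothesis $\|x_i-x_j\|_2 \gtrsim \sqrt{\log(1/\e)/\log m}$ coincides with ours since $\log(1/\e) \asymp \log\log d$; the requirement $\e \in [m^{-1/5},1/8]$ holds because $d \le \exp(c_5 m^{1/5})$ forces $1/\log d \gtrsim m^{-1/5}$; and $K \le \exp(c_5 m^{1/5}) \le \exp(c_2 m^{1/5})$ for $c_5$ small. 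This yields an almost pseudolinear $E$ such that $u_k := E(x_k)$ satisfies $\abs[1]{\|u_i\|_2^2-1} \le \e$ and $\abs[1]{\ip{u_i}{u_j}} \le \e$ for all distinct $i,j$.

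Next I would feed the $u_k$ into Theorem~\ref{thm: eps-orthogonal to max orthogonal}; this is legal since $\e \le c_3/\log d$ by construction and $K \le \exp(c_5 d^{1/5}) \le \exp(c_3 d^{1/5})$. It produces an almost pseudolinear $R$ with $v_k := R(u_k)$ satisfying $\|v_i\|_2 \ge 1/2$ and $\abs[1]{\ip{v_i}{v_j}} \le \eta$, where $\eta := C_4 \log^2(dK)/\sqrt{d}$. Finally I would apply Theorem~\ref{thm: one layer} to the $v_k$ with this $\eta$. Since $\eta^{-2} = d/(C_4^2 \log^4(dK))$, its hypothesis $(2K_1+r)\log K \le c_4 r \eta^{-2}$ is equivalent to $(2K_1+r)\log K\,\log^4(dK) \le (c_4/C_4^2)\, rd$. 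The term $2K_1\log K\,\log^4(dK) \le 2K_1\log^5(dK)$ is controlled by the hypothesis $K_1 \log^5(dK) \le c_5 dr$ once $c_5$ is small relative to $c_4/C_4^2$; the term $r\log K\,\log^4(dK)$ needs $\log K\,\log^4(dK) \lesssim d$, which holds because $K \le \exp(c_5 d^{1/5})$ gives $\log K \le c_5 d^{1/5}$ and $\log(dK) \lesssim d^{1/5}$, whence $\log K\,\log^4(dK) \lesssim c_5 d$. Thus Theorem~\ref{thm: one layer} produces a pseudolinear $P : \R^d \to \R^r$ with $P(v_k)=0$ if and only if $y_k=0$.

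It then remains to assemble $F := P \circ R \circ E$, observe that $F(x_k) = P(v_k) = 0$ iff $y_k = 0$, and check $F \in \FF(n,m,d,r)$. The only subtlety is that $E$ and $R$ are merely \emph{almost} pseudolinear, i.e. positive scalar multiples of pseudolinear maps (which for threshold activations is strictly weaker than pseudolinear); but the scalar in front of $E$ can be absorbed into the weight matrix of the affine part of $R$, and the scalar in front of $R$ into the weights of $P$, after which $F$ is a genuine composition of three pseudolinear maps $\R^n \to \R^m \to \R^d \to \R^r$. I do not expect any deep obstacle: essentially all the mathematical content is already in the three cited theorems. The delicate step is the bookkeeping — threading one constant $c_5$ through $c_2, c_3, c_4$ and the absolute constants hidden in the $\lesssim$'s so that every hypothesis holds simultaneously, the mildly non-obvious point being the $r$-term of the perception condition, which is precisely what forces the exponential restrictions $K \le \exp(c_5 d^{1/5})$, $K \le \exp(c_5 m^{1/5})$, and $d \le \exp(c_5 m^{1/5})$ into the statement.
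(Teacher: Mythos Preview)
Your proposal is correct and follows essentially the same approach as the paper: choose $\e \asymp 1/\log d$, apply the two enrichment theorems to get $E$ and $R$, apply the one-layer perception theorem to get $P$, and absorb the scalar factors so that $F = P \circ R \circ E \in \FF(n,m,d,r)$. Your write-up in fact spells out the hypothesis-checking (in particular the $r$-term of the perception condition and the absorption of the ``almost pseudolinear'' scalars) more explicitly than the paper does.
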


\begin{proof}
{\em Step 1. From separated to $\e$-orthogonal.}
Apply Theorem~\ref{thm: separated to eps-orthogonal} with $\e=c_5/\log d$.
(Note that the required constraints in that result hold by our assumptions with small $c_5$.)
We obtain an almost pseudolinear map $E: \R^n \to \R^m$ such that 
the vectors $u_k := E(x_k)$ satisfy
$$
\abs[1]{\|u_i\|_2^2-1} \le \e, \quad   
\abs[1]{\ip{u_i}{u_j}} \le \e
$$
for all distinct $i,j$.

{\em Step 2. From $\e$-orthogonal to $\frac{1}{\sqrt{d}}$-orthogonal.}
Apply Theorem~\ref{thm: eps-orthogonal to max orthogonal}. 
We obtain an almost pseudolinear map $R: \R^m \to \R^d$ 
such that the vectors $v_k := R(u_k)$ satisfy
$$
\|v_i\|_2 \ge \frac{1}{2}, \quad   
\abs[1]{\ip{v_i}{v_j}} 
\le \frac{C_4\log^2 (dK)}{\sqrt{d}} 
=: \eta
$$
for all distinct indices $i,j$.

{\em Step 3. Perception.}
Apply Theorem~\ref{thm: one layer}. 
(Note that our assumptions with small enough $c_5$ ensure that the required constraint 
$(2K_1+r) \log K \le c_4r\eta^{-2}$ does hold.)
We obtain a pseudolinear map $P: \R^d \to \R^r$
such that the vectors $z_k := P(v_k)$ satisfy: 
$$
z_k=0 \text{ iff } y_k=0.
$$

{\em Step 4. Assembly.}
Define
$$
F := P \circ R \circ E.
$$
Since $E$ and $R$ are almost pseudolinear and $P$ is pseudolinear, 
$F$ can be represented as a composition of three pseudolinear maps
(by absorbing the linear factors), i.e. $F \in \FF(n,m,d,r)$.
Also, $F(x_k) = z_k$ by construction, so the proof is complete.
\end{proof}

Finally, we can extend Theorem~\ref{thm: shallow} for arbitrarily deep networks 
by distributing the memorization tasks among all layers evenly. 
Indeed, consider a network with $L$ layers and with $n_i$ nodes in layer $i$.
As in Theorem~\ref{thm: shallow}, first we enrich the data, thereby 
making the input to layer $3$ almost orthogonal. 
Train the map from layer $3$ to layer $4$ to memorize the labels of the first 
$O(n_3 n_4)$ data points 
using Theorem~\ref{thm: one layer} (for $d=n_3$, $r=n_4$, and $\eta \asymp 1/\sqrt{d}$).
Similarly, train the map from layer $4$ to layer $5$ to memorize the labels of the next
$O(n_4 n_5)$ data points, and so on. This allows us to train the network on the total of  
$O(n_3 n_4+n_4n_5+\cdots+n_{L-1}n_L) = O(W)$ data points, where $W$ is the 
number of ``deep connections'' in the network, i.e. connections that occur from layer $3$ and up. 
This leads us to the main result of this paper.

\begin{theorem}[Deep networks]			\label{thm: deep}
  Let $n_1,\ldots,n_L$ be positive integers, and set 
  $n_0 := \min(n_2,\ldots,n_L)$ and $n_\infty := \max(n_2,\ldots,n_L)$.
  Consider unit vectors $x_1,\ldots,x_K \in \R^n$ that satisfy
  $$
  \|x_i-x_j\|_2 \ge C \sqrt{\frac{\log\log n_\infty}{\log n_0}}.
  $$
  Consider any labels $y_1,\ldots,y_K \in \{0,1\}$, at most $K_1$ of which equal $1$. 
  Assume that the number of deep connections $W := n_3 n_4 + \cdots n_{L-1} n_L$
  satisfies
  \begin{equation}	\label{eq: W deep}
  W \ge C K_1 \log^5 K,
  \end{equation}
  as well as $K \le \exp(cn_0^{1/5})$ and $n_\infty \le \exp(cn_0^{1/5})$.
  Then there exists a map $F \in \FF(n_1,\ldots,n_L)$ 
  such that for all $k=1,\ldots,K$ we have:
  $$
  F(x_k)=0 \text{ iff } y_k=0.
  $$
\end{theorem}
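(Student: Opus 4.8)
The plan is to upgrade Theorem~\ref{thm: shallow} from four layers to $L$ layers by \emph{stacking}: partition the $K_1$ ``active'' data points (those with $y_k=1$) into consecutive blocks, and have each pair of consecutive deep layers $(n_i,n_{i+1})$, $i=3,\ldots,L-1$, take responsibility for memorizing one block, while passing the rest of the data forward essentially untouched. First I would apply the two enrichment theorems (Theorem~\ref{thm: separated to eps-orthogonal} with $\e \asymp 1/\log n_0$, then Theorem~\ref{thm: eps-orthogonal to max orthogonal}) to build an almost pseudolinear map from layer $1$ through layer $3$ that turns the separated vectors $x_k$ into vectors $v_k^{(3)} \in \R^{n_3}$ with $\|v_i^{(3)}\|_2 \ge 1/2$ and $|\ip{v_i^{(3)}}{v_j^{(3)}}| \le C_4 \log^2(n_0 K)/\sqrt{n_0}$. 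The constraints $K \le \exp(c n_0^{1/5})$, $n_\infty \le \exp(c n_0^{1/5})$, and the separation hypothesis are exactly what the enrichment theorems require (with $m,d$ ranging over the relevant layer sizes, all bounded below by $n_0$ and above by $n_\infty$), so this step is essentially a citation.

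Next I would set up the block decomposition. Choose block sizes $K_1^{(i)} \asymp n_i n_{i+1}/\log^5 K$ for $i=3,\ldots,L-1$ so that $\sum_i K_1^{(i)} \gtrsim W/\log^5 K \ge K_1$ by hypothesis \eqref{eq: W deep}; partition the active indices accordingly. The induction step is: given vectors $v_k^{(i)} \in \R^{n_i}$ that are $\eta_i$-orthogonal (with $\eta_i \asymp \log^2(n_0 K)/\sqrt{n_i}$ or better) and normalized to $\|v_k^{(i)}\|_2 \ge 1/2$, produce a pseudolinear map $\R^{n_i} \to \R^{n_{i+1}}$ and vectors $v_k^{(i+1)}$ such that (a) the indices in block $i$ get ``flagged'' — some designated coordinate of $v_k^{(i+1)}$ is nonzero iff $k$ is in block $i$ and $y_k=1$ — and (b) all the other $v_k^{(i+1)}$ remain $\eta_{i+1}$-orthogonal and normalized, so the construction can continue. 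Part (a) is Theorem~\ref{thm: one layer} (applied with $d=n_i$, $r \asymp n_{i+1}$, $\eta=\eta_i$; the constraint $(2K_1^{(i)}+r)\log K \le c_4 r \eta_i^{-2}$ holds by our choice of $K_1^{(i)}$ and $n_i \ge n_0$). For part (b), I would devote a fraction of the $n_{i+1}$ output coordinates to re-running an enrichment map on the incoming $v_k^{(i)}$ (or simply a sparse random pseudolinear embedding $\R^{n_i}\to\R^{n_{i+1}}$) to get fresh almost-orthogonal representatives, and I would also carry along the accumulated flags from earlier blocks — a flag, once set, is a single Boolean coordinate that is trivial to propagate forward through one threshold layer (it re-fires itself). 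Finally, at the output node, OR together all the flags: $F(x) = \one_{\{\text{some flag coordinate of } v^{(L)} \text{ is nonzero}\}}$, which is one more pseudolinear map with threshold activation, so $F \in \FF(n_1,\ldots,n_L)$.

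The bookkeeping obstacle — and the step I expect to be genuinely the most delicate — is part (b): making sure that after the perception layer $i$ has "used up" some coordinates to flag block $i$, the remaining coordinates still carry a copy of the data that is almost orthogonal, normalized, and with a small enough separation/orthogonality parameter that the \emph{next} enrichment/perception round still satisfies its hypotheses. Concretely, one must verify that the orthogonality level does not degrade across $L$ rounds (it should stay $\lesssim \log^2(n_0 K)/\sqrt{n_0}$, since every layer has at least $n_0$ nodes), that the union bound over all $O(L)$ probabilistic constructions and all $\binom{K}{2}$ pairs still succeeds (here $L \le W \le$ something like $\exp(\mathrm{poly}(n_0))$, and the per-event failure probability is $N^{-5}$-type with $N$ polynomial in $n_0 K$, so this is fine), and that the flag-propagation and block sizes fit within the available widths. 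None of these is conceptually hard, but arranging the constants consistently — a single master constant $c$ small enough and $C$ large enough to feed all the invoked lemmas simultaneously — is where the real work lies. I would organize the proof as a clean induction on the layer index, with the invariant "$v_k^{(i)}$ are $\eta$-orthogonal with $\eta \le C_4 \log^2(n_0 K)/\sqrt{n_0}$, $\|v_k^{(i)}\|_2 \ge 1/2$, and the flag coordinates correctly encode membership in blocks $3,\ldots,i-1$'' carried from $i=3$ to $i=L$, and then read off $F$ from the final flags.
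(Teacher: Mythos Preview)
Your high-level plan --- partition the active indices into blocks proportional to $n_in_{i+1}$, memorize one block per layer via Theorem~\ref{thm: one layer}, and propagate flags forward to be OR'd at the end --- is exactly the paper's strategy. The gap is in part~(b), your inductive maintenance of almost-orthogonality. Theorem~\ref{thm: eps-orthogonal to max orthogonal} requires its input to satisfy $\abs[1]{\|u\|_2^2-1}\le\e$ with $\e\le c_3/\log d$, but its output only guarantees $\|v\|_2\ge 1/2$. So the invariant you state (``$\|v_k^{(i)}\|_2\ge 1/2$ and $\eta$-orthogonal'') is not strong enough to feed back into the enrichment theorem at the next step: the norms $\|v_k^{(i)}\|_2$ can vary by a constant factor across $k$, and since the sparsity level $p_0(u)=\E\phi(\gamma\|u\|-b)^2$ depends exponentially on $b^2(\|u\|^2-1)\asymp(\log d)(\|u\|^2-1)$, a constant-factor spread in the input norms destroys any useful control on the output. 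You cannot renormalize each $v_k^{(i)}$ by its own norm with a single pseudolinear map, so the chain breaks.

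The paper sidesteps this by \emph{not} chaining the second enrichment. Instead it carries only the output of the \emph{first} enrichment, $E(x)\in\R^{n_0/3}$, forward through every layer via the identity map (which is realizable on the image of $E$ since those vectors have coordinates in $\{0,\lambda\}$ for threshold, or are nonnegative for ReLU). Because $E(x)$ satisfies $\abs[1]{\|E(x)\|_2^2-1}\le\e$ with $\e\asymp 1/\log n_\infty$ once and for all, the second enrichment $R_i:\R^{n_0/3}\to\R^{n_i/3}$ can be applied \emph{fresh} at each layer $i$ from this same well-normalized source, yielding $O(\log^2(n_iK)/\sqrt{n_i})$-orthogonal vectors ready for perception $P_i:\R^{n_i/3}\to\R^{n_{i+1}/3}$. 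Concretely, each deep layer is split as $n_0/3+n_i/3+n_i/3$: the first block carries $E(x)$, the second holds $R_i(E(x))$, the third holds $P_{i-1}(R_{i-1}(E(x)))$; the perception outputs are summed (or OR'd) into a single accumulator node that rides along to the end. This is the ``missing idea'' in your sketch --- reserve $n_0/3$ nodes per layer to propagate the $\e$-orthogonal (not $1/\sqrt{d}$-orthogonal) representation, and redo the second enrichment from scratch each time rather than trying to preserve its output inductively.
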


We stated a simplified version of this result in Theorem~\ref{thm: main}. 
To see the connection, just take the `OR' of the outputs of all $n_L$ nodes 
of the last layer.

\begin{proof}
{\em Step 1. Initial reductions.}
For networks with $L=4$ layers, we already proved the result in Theorem~\ref{thm: shallow},
so we an assume that $L \ge 5$. 
Moreover, for $K=1$ the result is trivial, so we can assume that $K \ge 2$. 
In this case, if we make the constant $c$ in our assumption $2 \le \exp(cn_0^{1/5})$ 
sufficiently small, we can assume that $n_0$ (and thus also all $n_i$ and $W$)
are arbitrarily large, i.e. larger than any given absolute constant.

\medskip
{\em Step 2. Distributing data to layers.}
Without loss of generality, assume that the first $K_1$ of the labels $y_k$ equal $1$ 
and the rest equal zero, i.e. $y_k = \one_{\{k \le K_1\}}$.
Partition the indices of the nonzero labels $\{1,\ldots,K_1\}$ into subsets $I_3, \ldots, I_{L-1}$
(``batches'') so that
$$
\abs{I_i} \le \frac{n_i n_{i+1}}{W} K_1 + 1.
$$
(This is possible since the numbers $n_i n_{i+1}/W$ sum to one.)
For each batch $i$, define a new set of labels 
$$
y_{ki} = \one_{\{k \in I_i\}}, \quad k=1,\ldots,K.
$$
In other words, $y_{ki}$ are obtained from the original labels $y_i$ 
by zeroing out the labels outside batch $i$.

\medskip
{\em Step 3. Memorization at each layer.}
For each $i$, apply Theorem~\ref{thm: shallow} for the labels $y_{ki}$, 
for the number of nonzero labels $\abs{I_i}$ instead of $K_1$, 
and for $n=n_1$, $m=n_0/3$, $d=n_i/3$, $r=n_{i+1}/3$.
Thus, if 
\begin{equation}	\label{eq: niK1}
\bigg( \frac{n_i n_{i+1}}{W} K_1 + 1 \bigg) \log^5 \bigg( \frac{n_i K}{3} \bigg) 
\le \frac{c_5 n_i n_{i+1}}{9}
\end{equation}
as well as 
\begin{equation}	\label{eq: KKni}
K \le \exp(c_5 n_0^{1/5}/3), \qquad
K \le \exp(c_5 n_i^{1/5}/3), \qquad
n_i \le \exp(c_5 n_0^{1/5}/3),
\end{equation}
then there exist a map 
$$
F_i \in \FF(n_1,n_0/3,n_i/3,n_{i+1}/3)
$$ 
satisfying the following for all $i$ and $k$:
\begin{equation}	\label{eq: Fixk}
F_i(x_k)=0 \text{ iff } y_{ki}=0.
\end{equation}
Moreover, when we factorize $F_i = P_i \circ R_i \circ E_i$ into three almost pseudolinear maps, 
then $E_i=E$, the enrichment map from $\R^{n_1}$ into $\R^{n_0/3}$,
is trivially independent of $i$, so 
\begin{equation}	\label{eq: Fi factorized}
F_i = P_i \circ R_i \circ E.
\end{equation}

Our assumptions with sufficiently small $c$ guarantee 
that the required conditions \eqref{eq: KKni} do hold. 
In order to check \eqref{eq: niK1}, we will first note a somewhat stronger bound  
than \eqref{eq: W deep} holds, namely we have
\begin{equation}	\label{eq: W deep stronger}
3^5 W \ge CK_1 \log^5 (n_i K), \quad i=3,\ldots,L-1.
\end{equation}
Indeed, if $W \ge K^2$ then using that $K_1 \le K \le W^{1/2}$ and $n_i \le W$ we get
$$
K_1 \log^5 (n_i K)
\le W^{1/2} \log^5(W^{3/2})
= \frac{3^5}{2^5} W^{1/2} \log^5 W
\le \frac{3^5}{C} W
$$
when $W$ is sufficiently large. 
If $W \le K^2$ then using that $n_i \le W \le K^2$ we get 
$$
K_1 \log^5 (n_i K)
\le K_1 \log^5(K^3)
= 3^5 K_1 \log^5 K
\le \frac{3^5}{C} W
$$
where the last step follows from \eqref{eq: W deep}.
Hence, we verified \eqref{eq: W deep stronger} for the entire range of $W$.

Now, to check \eqref{eq: niK1}, note that
$$
\log^5 (n_i K)
\le 2^5 \big( \log^5 n_i + \log^5 K \big)
\le \frac{c_5 n_i}{20}
\le  \frac{c_5 n_i n_{i+1}}{20}
$$
where we used that $n_i$ is arbitrarily large and the assumption on $K$
with a sufficiently small constant $c$. 
Combining this bound with \eqref{eq: W deep stronger}, we obtain 
$$
\bigg( \frac{n_i n_{i+1}}{W} K_1 + 1 \bigg) \log^5 \bigg( \frac{n_i K}{3} \bigg) 
\le \bigg( \frac{3^5}{C} + \frac{c_5}{20} \bigg) n_i n_{i+1}
\le \frac{c_5 n_i n_{i+1}}{9}
$$
if $C$ is sufficiently large. We have checked\eqref{eq: niK1}.

\medskip
{\em Step 4. Stacking.}
To complete the proof, it suffices to construct map 
$F  \in \FF(n_1,\ldots,n_L)$ with the following property:
\begin{equation}	\label{eq: FxFix}
F(x) = 0 \quad \text{iff} \quad F_i(x)=0 \; \forall i=3,\ldots,L-1.
\end{equation}
Indeed, this would imply that $F(x_k)=0$ happens iff
$F_i(x_k)=0$ for all $i$, which, according to \eqref{eq: Fixk} is equivalent
to $y_{ki}=0$ for all $i$. By definition of $y_{ki}$, this is further equivalent
to $k \not\in I_i$ for any $i$, which by construction of $I_i$ is equivalent
to $k>K_1$, which is finally equivalent to $y_k=0$, as claimed. 

\begin{figure}[htp]	
  \centering 
    \includegraphics[width=0.7\textwidth]{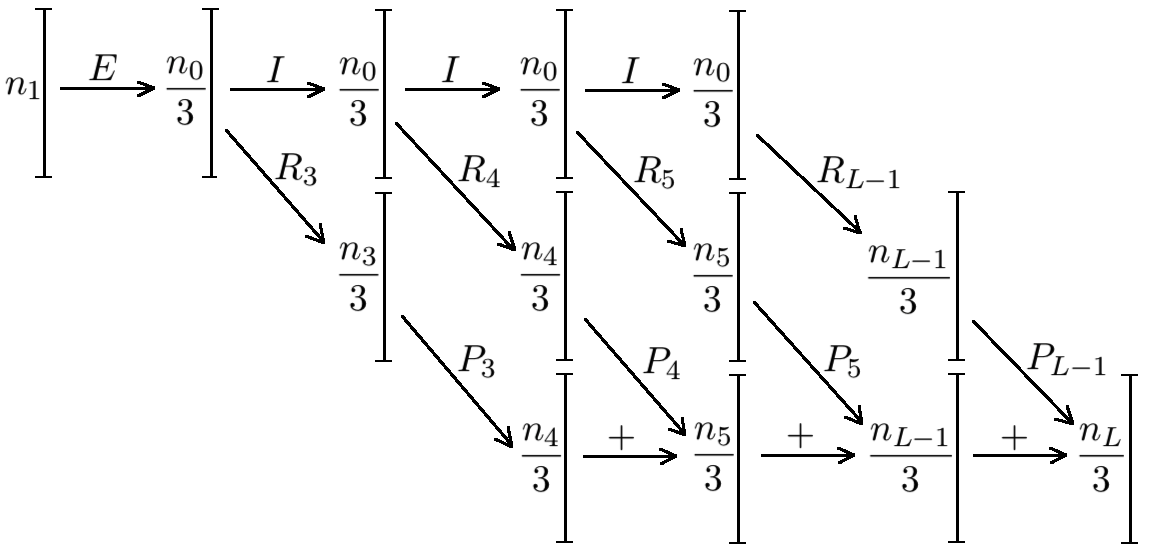} 
    \caption{Trading width for depth: stacking shallow networks into a deep network.}
  \label{fig: assembly}
\end{figure}

We construct $F$ by ``stacking'' the maps $F_i = P_i \circ R_i \circ E$ for $i=3,\ldots,L-1$ 
as illustrated in Figure~\ref{fig: assembly}. 
To help us stack these maps, we drop some nodes from the original network and 
first construct 
$$
F \in \FF(n'_1, \ldots, n'_L)
$$
with some $n'_i \le n_i$; we can then extend $F$ trivially to $\FF(n_1, \ldots, n_L)$.
As Figure~\ref{fig: assembly} suggests, we choose
$n'_1=n_1$, 
$n'_2=n_0/3$, 
$n'_3=n_0/3+n_3/3$, 
$n'_i=n_0/3+2n_i/3$ for $i=4,\ldots,L-2$ (skip these layers if $L=5$),
$n'_{L-1} = 2n_{L-1}/3$, 
and $n'_L = n_L/3$.
Note that by definition of $n_0$, we indeed have $n'_i \le n_i$ for all $i$.

We made this choice so that the network can realize the maps $F_i$.
As Figure~\ref{fig: assembly} illustrates, 
the map $F_3 = P_3 \circ R_3 \circ E \in \FF(n_1,n_0/3,n_3/3,n_4/3)$ is realized 
by setting the factor $E : \R^{n_1} \to \R^{n_0/3}$
to map the first layer to the second, the factor $R_3: \R^{n_0/3} \to \R^{n_3/3}$ to map the second 
layer to the last $n_3/3$ nodes of the third layer, and the factor $P_3: \R^{n_3/3} \to \R^{n_4/3}$ 
to map further to the last $n_3/3$ nodes of the fourth layer.
Moreover, the output of the second layer is transferred to the first $n_0/3$ nodes of the third layer by the identity map\footnote{Note that the identity map restricted to the image of $E$ can be realized
as an almost pseudolinear map for both ReLU and threshold nonlinearities. 
For ReLU this is obvious by setting the bias large enough; 
for threshold nonlinearity note that the image of the almost pseudolinear map $E$ consists of vectors whose coordinates are either zero or take the same value $\l$. Thus, the Heaviside function multiplied by $\l$ is the identity on the image of $E$.} $I$, so we can realize the next map $F_4$, and so on.

The outputs of all maps $F_i$ are {\em added together} as the signs ``+'' in Figure~\ref{fig: assembly} indicate. Namely, the components of the output of $F_1$, i.e. the last $n_4/3$ nodes of the fourth layer, are summed together and added to any node (say, the last node) 
of the fifth layer; the components of the output of $F_2$, i.e. the last $n_5/3$ nodes of the fourth layer, are summed together and added to the last node of the sixth layer, and so on. 
For ReLU nonlinearity, the $+$ refers to addition of real numbers; 
for threshold nonlinearity, we replace adding by taking the maximum (i.e. the `OR' operation), 
which is clearly realizable. 

\medskip
{\em Step 5. Conclusion.}
Due to our construction, the sum of all $n'_L$ components of the function $F(x)$
computed by the network equals the sum (or maximum, for threshold nonlinearity) 
of all components of all functions 
$F_i(x)$. Since the components are always nonnegative, 
$F(x)$ is zero iff all components of all functions $F_i(x)$ are zero. In other words, 
our claim \eqref{eq: FxFix} holds. 
\end{proof}

\appendix
\section{Asymptotical expressions for Gaussian integrals}

The asymptotical expansion of Mills ratio for the normal distribution is 
well known, see \cite{pinelis2002monotonicity}. For our purposes, 
the first three terms of the expansion will be sufficient:
\begin{equation}	\label{eq: Mills}
\Psi(a) 
= \frac{\int_a^\infty e^{-x^2/2} \; dx}{e^{-a^2/2}}
= a^{-1} - a^{-3} + 3a^{-5} + O(a^{-7}).
\end{equation}
In particular, the tail probability of the standard normal random variable 
$\gamma \sim N(0,1)$ satisfies
\begin{equation}	\label{eq: Mills ratio tail}
\Pr{\gamma>a} = \frac{1}{\sqrt{2\pi}} e^{-a^2/2} \Big( a^{-1} + O(a^{-3}) \Big).
\end{equation}

The following two lemmas give asymptotical expressions
for the expected value of the first two moments of 
the random variable $(\gamma-a)_+ = \max(\gamma-a,0)$ where, as before, 
$\gamma \sim N(0,1)$ is standard normal. 

\begin{lemma}[ReLU of the normal distribution]		\label{lem: ReLU normal}
  Let $\gamma \sim N(0,1)$. Then, as $a \to \infty$, we have
  \begin{align*}
  \E (\gamma-a)_+ &= \frac{1}{\sqrt{2\pi}} e^{-a^2/2} \Big( a^{-2} + O(a^{-4}) \Big), \\
  \E ((\gamma-a)_+)^2 &= \frac{1}{\sqrt{2\pi}} e^{-a^2/2} \Big( 2a^{-3} + O(a^{-5}) \Big). 
  \end{align*}
\end{lemma}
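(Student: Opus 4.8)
The plan is to compute both Gaussian integrals directly by the substitution $x = a + t$ and then expand the resulting exponential factor. Writing $\gamma \sim N(0,1)$, we have for the first moment
\[
\E (\gamma - a)_+ = \frac{1}{\sqrt{2\pi}} \int_a^\infty (x-a) e^{-x^2/2} \, dx
= \frac{1}{\sqrt{2\pi}} e^{-a^2/2} \int_0^\infty t \, e^{-at} e^{-t^2/2} \, dt,
\]
and similarly
\[
\E ((\gamma-a)_+)^2 = \frac{1}{\sqrt{2\pi}} e^{-a^2/2} \int_0^\infty t^2 \, e^{-at} e^{-t^2/2} \, dt.
\]
So after pulling out the factor $\frac{1}{\sqrt{2\pi}} e^{-a^2/2}$, everything reduces to estimating the two integrals $J_1(a) := \int_0^\infty t e^{-at} e^{-t^2/2}\,dt$ and $J_2(a) := \int_0^\infty t^2 e^{-at} e^{-t^2/2}\,dt$ as $a \to \infty$.

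The key step is to handle the $e^{-t^2/2}$ factor. The cleanest route is to bound it above and below: since $1 - t^2/2 \le e^{-t^2/2} \le 1$ for $t \ge 0$, we get two-sided estimates of $J_1$ and $J_2$ in terms of the pure exponential moments $\int_0^\infty t^k e^{-at}\,dt = k! \, a^{-(k+1)}$. For $J_1$ this gives $a^{-2} - \frac{1}{2}(3!\,a^{-4}) \le J_1(a) \le a^{-2}$, i.e. $J_1(a) = a^{-2} + O(a^{-4})$; for $J_2$ it gives $2a^{-3} - \frac{1}{2}(4!\,a^{-5}) \le J_2(a) \le 2a^{-3}$, i.e. $J_2(a) = 2a^{-3} + O(a^{-5})$. (Alternatively, one can simply Taylor-expand $e^{-t^2/2} = 1 - t^2/2 + \cdots$ inside the integral and integrate term by term, using that the tail contributions are controlled by the Laplace-type decay $e^{-at}$; but the two-sided sandwich avoids any need to justify interchanging sum and integral.) Substituting these back yields exactly the claimed expansions.

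I do not expect a genuine obstacle here — the substitution $x = a+t$ turns both integrals into elementary Laplace-transform computations, and the only care needed is to get the error terms of the right order, which the crude bound $1 - t^2/2 \le e^{-t^2/2} \le 1$ supplies. One small point worth stating explicitly: the inequality $1 - t^2/2 \le e^{-t^2/2}$ holds for all real $t$, and the remainder integrals $\int_0^\infty t^{k+2} e^{-at}\,dt$ are positive, so the sandwich is valid for every $a > 0$, giving a clean $O(\cdot)$ statement as $a \to \infty$. If one instead wanted the full asymptotic series (not needed here), one would expand $e^{-t^2/2} = \sum_j (-1)^j t^{2j}/(2^j j!)$ and integrate term by term, recovering $J_1(a) \sim \sum_j (-1)^j (2j+1)!/(2^j j!)\, a^{-(2j+2)}$ and analogously for $J_2$.
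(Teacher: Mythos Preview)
Your proof is correct, but proceeds along a different line than the paper's. The paper expands the polynomial factor $(x-a)^k$ directly, reducing each integral to a combination of $\int_a^\infty e^{-x^2/2}\,dx$, $\int_a^\infty x e^{-x^2/2}\,dx$, and (for $k=2$) $\int_a^\infty x^2 e^{-x^2/2}\,dx$; the first is handled by the Mills ratio expansion \eqref{eq: Mills}, the second evaluates exactly to $e^{-a^2/2}$, and the third is integrated by parts back to the first two. You instead shift $x=a+t$ to obtain a Laplace-type integral $\int_0^\infty t^k e^{-at} e^{-t^2/2}\,dt$ and sandwich $e^{-t^2/2}$ between $1-t^2/2$ and $1$, reducing everything to the gamma integrals $\int_0^\infty t^j e^{-at}\,dt = j!\,a^{-(j+1)}$. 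Your route is fully self-contained --- it does not rely on the Mills ratio expansion at all --- and generalizes immediately to higher moments or a full asymptotic series (as you note); the paper's route has the virtue of tying the result to the Mills ratio already recorded in the appendix. Both are elementary and of comparable length.
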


\begin{proof}
Expressing expectation as the integral of the tail (see e.g. \cite[Lemma~1.2.1]{Vbook}), we have
\begin{align*} 
\sqrt{2\pi} \, \E (\gamma-a)_+
&= \int_0^\infty (x-a)_+ \, e^{-x^2/2} \; dx
=  \int_a^\infty (x-a) e^{-x^2/2} \; dx \\
&= \int_a^\infty x e^{-x^2/2} \; dx - a \int_a^\infty e^{-x^2/2} \; dx.
\end{align*}
Using substitution $y=x^2/2$, we see that the value of the 
first integral on the right hand side is $e^{-a^2/2}$. 
Using the Mills ratio asymptotics \eqref{eq: Mills} for the second integral, we get
$$
\sqrt{2\pi} \, \E (\gamma-a)_+
= e^{-a^2/2} - a \cdot e^{-a^2/2} \Big( a^{-1} - a^{-3} + O(a^{-5}) \Big)
= e^{-a^2/2} \Big( a^{-2} + O(a^{-4}) \Big).
$$
This finishes the first part of the lemma.

To prove the second part, we start similarly: 
\begin{align*} 
\sqrt{2\pi} \, \E ((\gamma-a)_+)^2
&=  \int_a^\infty (x-a)^2 e^{-x^2/2} \; dx  \\
&= \int_a^\infty x^2 e^{-x^2/2} \; dx 
	- 2a \int_a^\infty x e^{-x^2/2} \; dx
	+ a^2 \int_a^\infty e^{-x^2/2} \; dx.
\end{align*}
Integrating by parts, we find that the first integral on the right side
equals 
$$
a e^{-a^2/2} + \int_a^\infty e^{-x^2/2} \; dx
= a e^{-a^2/2} + \Psi(a) e^{-a^2/2};
$$
the second integral equals $e^{-a^2/2}$ as before, 
and the third integral equals $\Psi(a) e^{-a^2/2}$.
Combining these and using the asymptotical expansion \eqref{eq: Mills} 
for $\Psi(a)$, we conclude that
\begin{align*} 
\sqrt{2\pi} \, \E ((\gamma-a)_+)^2
&= a e^{-a^2/2} + \Psi(a) e^{-a^2/2} 
	-2a e^{-a^2/2}
	+ a^2 \Psi(a) e^{-a^2/2} \\
&= e^{-a^2/2} \Big( (a^2+1) \Psi(a) - a \Big)
= e^{-a^2/2} \Big( 2a^{-3} + O(a^{-5}) \Big).
\end{align*}
This completes the proof of the second part of the lemma.
\end{proof}

\begin{lemma}[Stability]			\label{lem: stability}
  Fix any $z > -1$. Let $\gamma \sim N(0,1)$. Then, as $a \to \infty$, we have 
  \begin{align}
  \frac{\P\{\gamma \sqrt{1+z} > a\}}{\Pr{\gamma>a}}
  	&= \exp \bigg( \frac{a^2 z}{2(1+z)} \bigg) \, (1+z)^{1/2} \, \Big( 1 + O(a^{-2}) \Big); \\
  \frac{\E (\gamma \sqrt{1+z} - a)_+}{\E(\gamma - a)_+}
  	&= \exp \bigg( \frac{a^2 z}{2(1+z)} \bigg) \, (1+z) \, \Big( 1 + O(a^{-2}) \Big); \\
  \frac{\E \big( (\gamma \sqrt{1+z} - a)_+ \big)^2}{\E((\gamma - a)_+)^2}
  	&= \exp \bigg( \frac{a^2 z}{2(1+z)} \bigg) \, (1+z)^{3/2} \, \Big( 1 + O(a^{-2}) \Big).
  \end{align} 
\end{lemma}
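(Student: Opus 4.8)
The plan is to reduce all three identities to the scalar Gaussian asymptotics already recorded in the appendix --- the Mills ratio expansion \eqref{eq: Mills ratio tail} and Lemma~\ref{lem: ReLU normal} --- by a single change of scale. Write $w := 1+z > 0$. Since $\gamma\sqrt{w}$ has the same law as $\sqrt{w}\,\gamma$ with $\gamma \sim N(0,1)$, we have $\Pr{\gamma\sqrt{w} > a} = \Pr{\gamma > a/\sqrt{w}}$, and also the pointwise identities $(\gamma\sqrt{w} - a)_+ = \sqrt{w}\,(\gamma - a/\sqrt{w})_+$ and $\big((\gamma\sqrt{w} - a)_+\big)^2 = w\,\big((\gamma - a/\sqrt{w})_+\big)^2$. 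Hence each numerator in the lemma equals $w^{j/2}$, with $j=0,1,2$ on the three lines respectively, times the corresponding standard-normal quantity evaluated at the shifted threshold $a' := a/\sqrt{w}$. I would then write the denominator as the same standard-normal quantity at threshold $a$, and divide.

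First I would dispose of the exponential factor, which is common to all three lines: each of \eqref{eq: Mills ratio tail} and Lemma~\ref{lem: ReLU normal} carries the Gaussian factor $e^{-(\cdot)^2/2}$, which at threshold $a'$ equals $e^{-a^2/(2w)}$; dividing by the $e^{-a^2/2}$ from the denominator produces $\exp\!\big(\tfrac{a^2}{2}-\tfrac{a^2}{2w}\big) = \exp\!\big(\tfrac{a^2 z}{2(1+z)}\big)$, exactly the prefactor asserted. What remains is the ratio of the polynomial parts: by \eqref{eq: Mills ratio tail} the polynomial part of $\Pr{\gamma > t}$ is $t^{-1}+O(t^{-3})$, and by Lemma~\ref{lem: ReLU normal} those of $\E(\gamma-t)_+$ and $\E\big((\gamma-t)_+\big)^2$ are $t^{-2}+O(t^{-4})$ and $2t^{-3}+O(t^{-5})$. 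Factoring the leading monomial out of each rewrites it as its leading monomial times $\big(1+O(t^{-2})\big)$; taking the quotient at $t=a'$ over $t=a$ contributes a factor $(a'/a)^{-k}=w^{k/2}$ of $w$ from the leading monomials ($k=1,2,3$ respectively), and multiplying by the scaling factor $w^{j/2}$ from the first step accounts for the power of $1+z$ on that line.

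No single step is hard, so the one place I would slow down is the bookkeeping of the error terms. One must verify that the first correction in each of \eqref{eq: Mills ratio tail} and Lemma~\ref{lem: ReLU normal} is genuinely two orders below the leading term --- which it is, since the expansion $\Psi(a) = a^{-1}-a^{-3}+3a^{-5}+O(a^{-7})$ of \eqref{eq: Mills} forces exactly that gap into every derived moment --- so that the factored form $L\cdot(1+O(t^{-2}))$ is legitimate; that substituting $t=a'=a/\sqrt{1+z}$ turns $O(t^{-2})$ into $O(a^{-2})$, which is fine because $z$ is fixed and the implied constant may depend on it; and that the quotient of two quantities of the form $L\cdot(1+O(a^{-2}))$ is again of that form, using $1/(1+O(a^{-2})) = 1+O(a^{-2})$ for $a$ large. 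This same bookkeeping handles the threshold-tail, first-ReLU-moment, and second-ReLU-moment cases in one stroke, which is why the lemma can bundle them together.
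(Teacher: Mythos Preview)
Your approach is precisely the paper's: its entire proof reads ``Use the asymptotics in \eqref{eq: Mills ratio tail} and Lemma~\ref{lem: ReLU normal} for $a$ and $a/\sqrt{1+z}$ and simplify.'' The scaling identities you invoke, the separation into the exponential factor $\exp\!\big(\tfrac{a^2}{2}-\tfrac{a^2}{2w}\big)=\exp\!\big(\tfrac{a^2 z}{2(1+z)}\big)$ and the ratio of leading monomials, and the $O(a^{-2})$ bookkeeping are all correct.

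There is, however, one place where you should not have hand-waved. You write that combining the scaling factor $w^{j/2}$ with the leading-monomial ratio $w^{k/2}$ ``accounts for the power of $1+z$ on that line,'' but if you actually add the exponents --- $(j,k)=(0,1),(1,2),(2,3)$ on the three lines --- you obtain $(1+z)^{1/2}$, $(1+z)^{3/2}$, $(1+z)^{5/2}$, not the $(1+z)^{1/2}$, $(1+z)$, $(1+z)^{3/2}$ printed in the lemma. Your arithmetic is the correct one and the stated exponents on the second and third lines are evidently typos; indeed, when the paper applies the first-moment case in the proof of Proposition~\ref{prop: correlation epsilon-orthogonal} it writes the bound with the factor $(1+2\e)^{3/2}$, consistent with your exponent $3/2$ rather than the stated exponent $1$. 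A careful write-up would have carried out this last check and flagged the discrepancy instead of asserting a match.
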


\begin{proof}
Use the asymptotics in \eqref{eq: Mills ratio tail} and Lemma~\ref{lem: ReLU normal} 
for $a$ and $a/\sqrt{1+z}$ and simplify.
\end{proof}

We complete this paper by proving an elementary monotonicity property for Gaussian integrals, 
which we used in the proof of of Proposition~\ref{prop: correlation epsilon-orthogonal}.

\begin{lemma}[Monotonicity]				\label{lem: monotonicity}
  Let $\psi: \R \to [0,\infty)$ be a nondecreasing function satisfying $\psi(t)=0$ for all $t<0$,
  and let $\gamma \sim N(0,1)$. Then $\sigma \mapsto \E \psi(\s\gamma)$ is a nondecreasing 
  function on $[0,\infty)$.
\end{lemma}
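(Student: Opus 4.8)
The plan is to write the expectation as an integral against the \emph{fixed} standard Gaussian density and to let the scaling act on the argument of $\psi$; monotonicity of $\psi$ on the half-line where it is not identically zero then finishes the job. I would stress at the outset that one should substitute the unscaled variable $\gamma$: the tempting alternative of comparing the densities of $N(0,\sigma_1^2)$ and $N(0,\sigma_2^2)$ pointwise does not work, since for fixed $s\neq 0$ the map $\sigma\mapsto \sigma^{-1}e^{-s^2/2\sigma^2}$ is increasing on $(0,|s|)$ and decreasing on $(|s|,\infty)$, so neither density dominates the other.

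First I would note that $\psi$, being nondecreasing, is Borel measurable, and being nonnegative, the integral $\E\psi(\sigma\gamma)=\int_{\R}\psi(\sigma t)\,\frac{1}{\sqrt{2\pi}}e^{-t^2/2}\,dt$ is well defined in $[0,\infty]$ for every $\sigma\ge 0$. For $\sigma>0$ the condition $\sigma t<0$ is the same as $t<0$, and $\psi$ vanishes there; hence the integrand vanishes on $(-\infty,0)$ and
$$
\E\psi(\sigma\gamma)=\int_{0}^{\infty}\psi(\sigma t)\,\frac{1}{\sqrt{2\pi}}e^{-t^2/2}\,dt,\qquad \sigma>0 .
$$
Now fix $0<\sigma_1\le\sigma_2$. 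For every $t\ge 0$ we have $\sigma_1 t\le\sigma_2 t$, so $\psi(\sigma_1 t)\le\psi(\sigma_2 t)$ since $\psi$ is nondecreasing. Integrating this pointwise inequality against the nonnegative measure $\frac{1}{\sqrt{2\pi}}e^{-t^2/2}\,dt$ on $[0,\infty)$ gives $\E\psi(\sigma_1\gamma)\le\E\psi(\sigma_2\gamma)$, which is the asserted monotonicity on $(0,\infty)$.

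It remains to include the endpoint $\sigma=0$, where $\E\psi(0\cdot\gamma)=\psi(0)$. In every use of this lemma (namely $\psi(s)=\phi(s-b)^2$ with $b$ larger than an absolute constant, as in Proposition~\ref{prop: correlation epsilon-orthogonal}) one has $\psi(0)=0$, so $\E\psi(0\cdot\gamma)=0\le\E\psi(\sigma\gamma)$ for all $\sigma\ge 0$ and monotonicity extends to all of $[0,\infty)$. I do not expect any real obstacle here: the whole content is the remark about which substitution to use, after which the argument is a one-line pointwise comparison; the only point requiring a word of care is the boundary value, and there the lemma is only ever applied with a strictly positive scalar (in Proposition~\ref{prop: correlation epsilon-orthogonal} the scalar is $\|u\|_2\ge\sqrt{1-\e}>0$).
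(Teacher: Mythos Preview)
Your proof is correct and follows exactly the same approach as the paper's: write $\E\psi(\sigma\gamma)=\int_0^\infty \psi(\sigma x)\,f(x)\,dx$ against the fixed standard Gaussian density and then compare pointwise using the monotonicity of $\psi$ on $[0,\infty)$. Your discussion of the endpoint $\sigma=0$ is in fact more careful than the paper's, which silently applies the identity there even though (as your remark implicitly reveals) the lemma as stated can fail at $\sigma=0$ when $\psi(0)>0$; your observation that only $\sigma>0$ is ever used suffices.
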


\begin{proof}
Denoting by $f(x)$ the probability density function of $N(0,1)$, we have
$$
\E \psi(\s\gamma)
= \int_\infty^\infty \psi(\s x) f(x) \, dx
= \int_0^\infty \psi(\s x) f(x) \, dx.
$$
The last step follows since, by assumption, $\psi(\s x)=0$ for all $x<0$.
To complete the proof, it remains to note that for every fixed $x \ge 0$, 
the function $\sigma \mapsto \psi(\s x)$ is nondecreasing. 
\end{proof}

\bibliographystyle{plain}
\bibliography{neural-networks}

\begin{thebibliography}{10}

\bibitem{allen2018convergence}
Zeyuan Allen-Zhu, Yuanzhi Li, and Zhao Song.
\newblock A convergence theory for deep learning via over-parameterization.
\newblock {\em arXiv preprint arXiv:1811.03962}, 2018.

\bibitem{arora2019fine}
Sanjeev Arora, Simon~S Du, Wei Hu, Zhiyuan Li, and Ruosong Wang.
\newblock Fine-grained analysis of optimization and generalization for
  overparameterized two-layer neural networks.
\newblock {\em arXiv preprint arXiv:1901.08584}, 2019.

\bibitem{baldi2018neuronal}
Pierre Baldi and Roman Vershynin.
\newblock On neuronal capacity.
\newblock In {\em Advances in Neural Information Processing Systems}, pages
  7729--7738, 2018.

\bibitem{baldi2019capacity}
Pierre Baldi and Roman Vershynin.
\newblock The capacity of feedforward neural networks.
\newblock {\em Neural Networks}, 116:288--311, 2019.

\bibitem{bartlett2019nearly}
Peter~L Bartlett, Nick Harvey, Christopher Liaw, and Abbas Mehrabian.
\newblock Nearly-tight vc-dimension and pseudodimension bounds for piecewise
  linear neural networks.
\newblock {\em Journal of Machine Learning Research}, 20(63):1--17, 2019.

\bibitem{baum1988capabilities}
Eric~B Baum.
\newblock On the capabilities of multilayer perceptrons.
\newblock {\em Journal of complexity}, 4(3):193--215, 1988.

\bibitem{baum1989size}
Eric~B Baum and David Haussler.
\newblock What size net gives valid generalization?
\newblock In {\em Advances in neural information processing systems}, pages
  81--90, 1989.

\bibitem{cover1965geometrical}
Thomas~M Cover.
\newblock Geometrical and statistical properties of systems of linear
  inequalities with applications in pattern recognition.
\newblock {\em IEEE transactions on electronic computers}, (3):326--334, 1965.

\bibitem{du2018gradientb}
Simon~S Du, Jason~D Lee, Haochuan Li, Liwei Wang, and Xiyu Zhai.
\newblock Gradient descent finds global minima of deep neural networks.
\newblock {\em arXiv preprint arXiv:1811.03804}, 2018.

\bibitem{du2018gradienta}
Simon~S Du, Xiyu Zhai, Barnabas Poczos, and Aarti Singh.
\newblock Gradient descent provably optimizes over-parameterized neural
  networks.
\newblock {\em arXiv preprint arXiv:1810.02054}, 2018.

\bibitem{ge2019mildly}
Rong Ge, Runzhe Wang, and Haoyu Zhao.
\newblock Mildly overparametrized neural nets can memorize training data
  efficiently.
\newblock {\em arXiv preprint arXiv:1909.11837}, 2019.

\bibitem{hardt2016identity}
Moritz Hardt and Tengyu Ma.
\newblock Identity matters in deep learning.
\newblock {\em arXiv preprint arXiv:1611.04231}, 2016.

\bibitem{huang2003learning}
Guang-Bin Huang.
\newblock Learning capability and storage capacity of two-hidden-layer
  feedforward networks.
\newblock {\em IEEE Transactions on Neural Networks}, 14(2):274--281, 2003.

\bibitem{ji2019polylogarithmic}
Ziwei Ji and Matus Telgarsky.
\newblock Polylogarithmic width suffices for gradient descent to achieve
  arbitrarily small test error with shallow relu networks.
\newblock {\em arXiv preprint arXiv:1909.12292}, 2019.

\bibitem{kowalczyk1997estimates}
Adam Kowalczyk.
\newblock Estimates of storage capacity of multilayer perceptron with threshold
  logic hidden units.
\newblock {\em Neural networks}, 10(8):1417--1433, 1997.

\bibitem{li2018learning}
Yuanzhi Li and Yingyu Liang.
\newblock Learning overparameterized neural networks via stochastic gradient
  descent on structured data.
\newblock In {\em Advances in Neural Information Processing Systems}, pages
  8157--8166, 2018.

\bibitem{mitchison1989bounds}
GJ~Mitchison and RM~Durbin.
\newblock Bounds on the learning capacity of some multi-layer networks.
\newblock {\em Biological Cybernetics}, 60(5):345--365, 1989.

\bibitem{oymak2019towards}
Samet Oymak and Mahdi Soltanolkotabi.
\newblock Towards moderate overparameterization: global convergence guarantees
  for training shallow neural networks.
\newblock {\em arXiv preprint arXiv:1902.04674}, 2019.

\bibitem{pinelis2002monotonicity}
Iosif Pinelis.
\newblock Monotonicity properties of the relative error of a pad{\'e}
  approximation for mills’ ratio.
\newblock {\em J. Inequal. Pure Appl. Math}, 3(2):1--8, 2002.

\bibitem{song2019quadratic}
Zhao Song and Xin Yang.
\newblock Quadratic suffices for over-parametrization via matrix chernoff
  bound.
\newblock {\em arXiv preprint arXiv:1906.03593}, 2019.

\bibitem{sun2019optimization}
Ruoyu Sun.
\newblock Optimization for deep learning: theory and algorithms.
\newblock {\em arXiv preprint arXiv:1912.08957}, 2019.

\bibitem{Vbook}
Roman Vershynin.
\newblock High-dimensional probability: An introduction with applications in
  data science. cambridge series in statistical and probabilistic mathematics,
  2018.

\bibitem{yamasaki1993lower}
Masami Yamasaki.
\newblock The lower bound of the capacity for a neural network with multiple
  hidden layers.
\newblock In {\em International Conference on Artificial Neural Networks},
  pages 546--549. Springer, 1993.

\bibitem{yun2019small}
Chulhee Yun, Suvrit Sra, and Ali Jadbabaie.
\newblock Small relu networks are powerful memorizers: a tight analysis of
  memorization capacity.
\newblock In {\em Advances in Neural Information Processing Systems}, pages
  15532--15543, 2019.

\bibitem{zhang2016understanding}
Chiyuan Zhang, Samy Bengio, Moritz Hardt, Benjamin Recht, and Oriol Vinyals.
\newblock Understanding deep learning requires rethinking generalization.
\newblock {\em ICLR}, 2017.

\bibitem{zou2018stochastic}
Difan Zou, Yuan Cao, Dongruo Zhou, and Quanquan Gu.
\newblock Stochastic gradient descent optimizes over-parameterized deep relu
  networks.
\newblock {\em arXiv preprint arXiv:1811.08888}, 2018.

\bibitem{zou2019improved}
Difan Zou and Quanquan Gu.
\newblock An improved analysis of training over-parameterized deep neural
  networks.
\newblock {\em arXiv preprint arXiv:1906.04688}, 2019.

\end{thebibliography}

\end{document}